\newcommand*{\Scale}[2][4]{\scalebox{#1}{\ensuremath{#2}}} 
\newtheorem{myTheorem}{Theorem}
\newtheorem{myLemma}{Lemma}
\newtheorem{myCorollary}{Corollary}
\newtheorem{myExample}{Example}
\tikzstyle{every edge}=  [draw]
\tikzstyle{vertex} = [draw,circle,minimum size=1pt]
\tikzstyle{label} = [minimum size=.1pt,font=\scriptsize]
\tikzstyle{title} = [minimum size=.25cm,font=\small]
\newcommand{\bs}[1]{\boldsymbol{#1}}
\newcommand{\fix}[1]{\Scale[.7]{#1}}
\def \R{\mathbb{R}}
\def \P{\mathsf{P}}
\def \T{\mathsf{T}}
\def \Ord{\mathscr{O}}
\def \nuu{{\hyperref[nuuDef]{\nu}}}
\def \r{{\hyperref[rDef]{r}}}
\def \d{{\hyperref[dDef]{d}}}
\def \N{{\hyperref[NDef]{N}}}
\def \m{{\hyperref[mDef]{m}}}
\def \mOf{{\hyperref[mOfDef]{m}}}
\def \mp{{\hyperref[mpDef]{m'}}}
\def \mdp{{\hyperref[mdpDef]{m''}}}
\def \n{{\hyperref[nDef]{n}}}
\def \nOf{{\hyperref[nOfDef]{n}}}
\def \np{{\hyperref[npDef]{n'}}}
\def \ndp{{\hyperref[ndpDef]{n''}}}
\def \li{{\hyperref[liDef]{\aleph}}}
\def \liOf{{\hyperref[liOfDef]{\aleph}}}
\def \f{{\hyperref[polyEq]{f}}}
\def \L{{\hyperref[LDef]{\ell}}}
\def \Li{{\hyperref[LiDef]{\ell_i}}}
\def \eps{{\hyperref[epsDef]{\epsilon}}}
\def \Ntilde{{\hyperref[NtildeDef]{\widetilde{N}}}}
\def \sstar{{\hyperref[sstarDef]{S^\star}}}
\def \sstaroi{{\hyperref[soiDef]{S^\star_{\bs{\omega}_i}}}}
\def \soi{{\hyperref[soiDef]{S_{\bs{\omega}_i}}}}
\def \sstaro{{\hyperref[cdotoDef]{S^\star_{\bs{\upsilon}}}}}
\def \SS{{\hyperref[SSDef]{\mathscr{S}}}}
\def \Gr{\hyperref[GrDef]{{\rm Gr}}}
\def \s{{\hyperref[sDef]{S}}}
\def \so{{\hyperref[cdotoDef]{S_{\bs{\upsilon}}}}}
\def \I{{\hyperref[IDef]{\bs{{\rm I}}}}}
\def \one{{\hyperref[oneDef]{\bs{{\rm 1}}}}}
\def \ai{{\hyperref[aiDef]{\bs{{\rm a}}_i}}}
\def \aiT{{\hyperref[aiDef]{\bs{{\rm a}}^\T_i}}}
\def \aoT{{\hyperref[cdotoDef]{\bs{{\rm a}}^\T_{\bs{\upsilon}}}}}
\def \aoi{{\hyperref[aoiDef]{\bs{{\rm a}}_{\bs{\omega}_i}}}}
\def \aoiT{{\hyperref[aoiDef]{\bs{{\rm a}}^\T_{\bs{\omega}_i}}}}
\def \hatai{{\hyperref[hataiDef]{\bs{\hat{{\rm a}}}_i}}}
\def \AA{{\hyperref[AADef]{\bs{{\rm A}}}}}
\def \A{{\hyperref[ADef]{\bs{{\rm A}}'}}}
\def \Adp{{\hyperref[AdpDef]{\bs{{\rm A}}''}}}
\def \Ao{{\hyperref[AoDef]{\bs{{\rm A}}'_{\bs{\upsilon}}}}}
\def \AoT{{\hyperref[AoDef]{\bs{{\rm A}}'^\T_{\bs{\upsilon}}}}}
\def \AAstar{{\hyperref[AAstarDef]{\bs{{\rm A}}^\star}}}
\def \B{\hyperref[firstDecompositionAEq]{\bs{{\rm B}}}}
\def \Bone{\hyperref[BoneDef]{\bs{{\rm B}}_1}}
\def \BoneInv{\hyperref[BoneDef]{\bs{{\rm B}}_1^{-1}}}
\def \Btwo{\hyperref[BtwoDef]{\bs{{\rm B}}_2}}
\def \BtwoTilde{\hyperref[BtwoTildeDef]{\widetilde{\bs{{\rm B}}}_2}}
\def \C{\hyperref[firstDecompositionAEq]{\bs{{\rm C}}}}
\def \DDstar{{\hyperref[AAstarEq]{\bs{{\rm D}}^\star}}}
\def \bb{\hyperref[bbDef]{\bs{\beta}}}
\def \uoi{\hyperref[cdotoDef]{\bs{{\rm u}}_{\bs{\omega}_i}}}
\def \uo{\hyperref[cdotoDef]{\bs{{\rm u}}_{\bs{\upsilon}}}}
\def \xoi{{\hyperref[xoiDef]{\bs{{\rm x}}_{\bs{\omega}_i}}}}
\def \X{{\hyperref[XDef]{\bs{{\rm X}}}}}
\def \x{{\hyperref[xDef]{\bs{{\rm x}}}}}
\def \XO{{\hyperref[XODef]{\bs{{\rm X}}_{\bs{\Omega}}}}}
\def \XOone{{\hyperref[XOoneDef]{\bs{{\rm X}}_{\bs{\Omega}_1}}}}
\def \XOtwo{{\hyperref[XOtwoDef]{\bs{{\rm X}}_{\bs{\Omega}_2}}}}
\def \oi{{\hyperref[oiDef]{\bs{\omega}_i}}}
\def \OO{{\hyperref[OODef]{\bs{\Omega}}}}
\def \O{{\hyperref[ODef]{\bs{\Omega}'}}}
\def \OTilde{{\hyperref[OTildeDef]{\widetilde{\bs{\Omega}}}}}
\def \OOtwo{{\hyperref[XOtwoDef]{\bs{\Omega}_2}}}
\def \Op{{\hyperref[OpDef]{\bs{\Omega}'}}}
\def \Odp{{\hyperref[OdpDef]{\bs{\Omega}''}}}
\def \o{{\hyperref[oDef]{\bs{\upsilon}}}}
\newcommand{\Oi}[1]{{\hyperref[OiDef]{\bs{\Omega}_{#1}}}}
\def \OSplit{{\hyperref[OSplitDef]{\bs{\breve{\Omega}}}}}
\newcommand{\upsiBotNum}[1]{{\hyperref[upsiTopDef]{\bs{\medtriangledown}_{i #1}}}}
\newcommand{\oij}[1]{{\hyperref[oijDef]{\bs{\omega}_{i #1}}}}
\def \OHat{{\hyperref[OHatDef]{\widehat{\bs{\Omega}}}}}
\def \i{{\hyperref[iDef]{i}}}
\def \j{{\hyperref[jDef]{j}}}
\newcommand{\ki}[1]{{\hyperref[kiDef]{k_{#1}}}}
\def \E{{\hyperref[EDef]{\mathcal{E}}}}
\def \En{{\hyperref[EnDef]{\mathcal{E}_n}}}
\def \GOO{{\hyperref[GODef]{\mathcal{G}(\bs{\Omega})}}}
\def \J{{\hyperref[JDef]{\bs{\Upsilon}}}}
\def \Jp{{\hyperref[JpDef]{\bs{\Upsilon}'}}}
\def \Jdp{{\hyperref[JdpDef]{\bs{\Upsilon}''}}}
\def \mld{{\hyperref[mldDef]{minimally linearly dependent}}}
\def \ae{{\hyperref[aeDef]{{\rm a.e.}}}}
\def \LRMC{{\hyperref[LRMCDef]{LRMC}}}
\def \fits{{\hyperref[fitsDef]{fits}}}
\def \fit{{\hyperref[fitsDef]{fit}}}
\def \whp{{\hyperref[whpDef]{w.h.p}}}
\def \neighborhood{{\hyperref[neighborhoodDef]{neighborhood}}}
\def \AoneAss{{\hyperref[AoneAssDef]{\textbf{A1}}}}
\def \AonepAss{{\hyperref[AonepAssDef]{\textbf{A1'}}}}
\def \AonedpAss{{\hyperref[AonedpAssDef]{\textbf{A1''}}}}
\def \AtwoAss{{\hyperref[AtwoAssDef]{\textbf{A2}}}}
\def \AthreeAss{{\hyperref[AthreeAssDef]{\textbf{A3}}}}
\def \identifiabilityThm{{\hyperref[identifiabilityThm]{Theorem \ref{identifiabilityThm}}}}
\def \probabilityThm{{\hyperref[probabilityThm]{Theorem \ref{probabilityThm}}}}
\def \aEntriesLem{{\hyperref[aEntriesLem]{Lemma \ref{aEntriesLem}}}}
\def \independenceLem{{\hyperref[independenceLem]{Lemma \ref{independenceLem}}}}
\def \liLem{{\hyperref[liLem]{Lemma \ref{liLem}}}}
\def \basisLem{{\hyperref[basisLem]{Lemma \ref{basisLem}}}}
\def \LRMCnecCor{{\hyperref[LRMCnecCor]{Corollary \ref{LRMCnecCor}}}}
\def \LRMCsuffCor{{\hyperref[LRMCsuffCor]{Corollary \ref{LRMCsuffCor}}}}
\def \fitsCor{{\hyperref[fitsCor]{Corollary \ref{fitsCor}}}}
\def \rConnectedCor{{\hyperref[rConnectedCor]{Corollary \ref{rConnectedCor}}}}
\def \introEg{{\hyperref[introEg]{Example \ref{introEg}}}}
\def \graphEg{{\hyperref[graphEg]{Example \ref{graphEg}}}}
\def \modelSec{{\hyperref[modelSec]{Section \ref{modelSec}}}}
\def \LRMCSec{{\hyperref[LRMCSec]{Section \ref{LRMCSec}}}}
\def \graphSec{{\hyperref[graphSec]{Section \ref{graphSec}}}}
\def \proofSec{{\hyperref[proofSec]{Section \ref{proofSec}}}}
\def \probabilityApx{{\hyperref[probabilityApx]{appendix}}}
\def \LRMCnecApx{{\hyperref[LRMCnecApx]{appendix}}}
\def \fitsApx{{\hyperref[fitsApx]{appendix}}}
\def \rConnectedApx{{\hyperref[rConnectedApx]{appendix}}}
\def \generalizationApx{{\hyperref[generalizationApx]{appendix}}}
\begin{document}
\title{Deterministic Conditions for Subspace Identifiability from Incomplete Sampling}

\author{\IEEEauthorblockN{Daniel L. Pimentel-Alarc\'on, Nigel Boston, Robert D. Nowak}
\IEEEauthorblockA{University of Wisconsin-Madison}
}

\maketitle

\begin{abstract}
Consider an $\r$-dimensional subspace of $\R^{\fix{\d}}$, $\r<\d$, and suppose that we are only given projections of this subspace onto small subsets of the canonical coordinates.  The paper establishes necessary and sufficient deterministic conditions on the subsets for subspace identifiability.  The results also shed new light on low-rank matrix completion.
\end{abstract}

\IEEEpeerreviewmaketitle

\section{Introduction}
\label{introSec}
Subspace identification arises in a wide variety of signal and information processing applications. 
In many cases, especially high-dimensional situations, it is common to encounter missing data.
Hence the growing literature concerning the estimation of low-dimensional subspaces and matrices from incomplete data in theory \cite{balzano, chi, mardani,candes,recht,kiraly,jain} and applications \cite{eriksson,he}.

This paper considers the problem of identifying an \phantomsection\label{rDef}$\r$-dimensional subspace of \phantomsection\label{dDef}$\R^{\fix{\d}}$ from projections of the subspace onto small subsets of the canonical coordinates.  The main contribution of this paper is to establish deterministic necessary and sufficient conditions on such subsets that guarantee that there is {\em only} one $\r$-dimensional subspace consistent with all the projections.  These conditions also have implications for low-rank matrix completion and related problems.

\subsection*{Organization of the paper}
In \modelSec\ we formally state the problem and our main results. We present the proof of our main theorem in \proofSec.   \LRMCSec\ illustrates the implications of our results for low-rank matrix completion. \graphSec\ presents the graphical interpretation of the problem and another necessary condition based on this viewpoint.

\vspace{.1cm}
\section{Model and main results}
\label{modelSec}
\vspace{.1cm}
Let \phantomsection\label{sstarDef}$\sstar$ denote an $\r$-dimensional subpace of $\R^{\fix{\d}}$.
Define \phantomsection\label{OODef}$\OO$ as a \phantomsection\label{NDef}$\d \times \N$ binary matrix and let \phantomsection\label{oiDef}$\oi$ denote the \phantomsection\label{iDef}$\i^{th}$ column of $\OO$.  The nonzero entries of $\oi$ indicate the canonical coordinates involved in the $\i^{th}$ projection.

Since $\sstar$ is $\r$-dimensional, the restriction of $\sstar$ onto $\L \leq \r$ coordinates will be $\R^{\fix{\L}}$ (in general), and hence such a projection will provide no information specific to $\sstar$. Therefore, without loss of generality (see the \generalizationApx\ for immediate generalizations) we will assume that:
\begin{itemize}
\phantomsection\label{AoneAssDef}
\item[\AoneAss]
$\OO$ has exactly $\r+1$ nonzero entries per column.
\end{itemize}
 
Given an $\r$-dimensional subspace \phantomsection\label{sDef}$\s$, let \phantomsection\label{soiDef}$\soi \subset \R^{\fix{\r}+1}$ denote the restriction of $\s$ to the nonzero coordinates in $\oi$.  The question addressed in this paper is whether the restrictions $\{\sstaroi\}_{\fix{\i}=1}^{\fix{\N}}$ uniquely determine $\sstar$.  This depends on the sampling pattern in $\OO$.

\begin{figure}
\centering
\includegraphics[width=3.45cm]{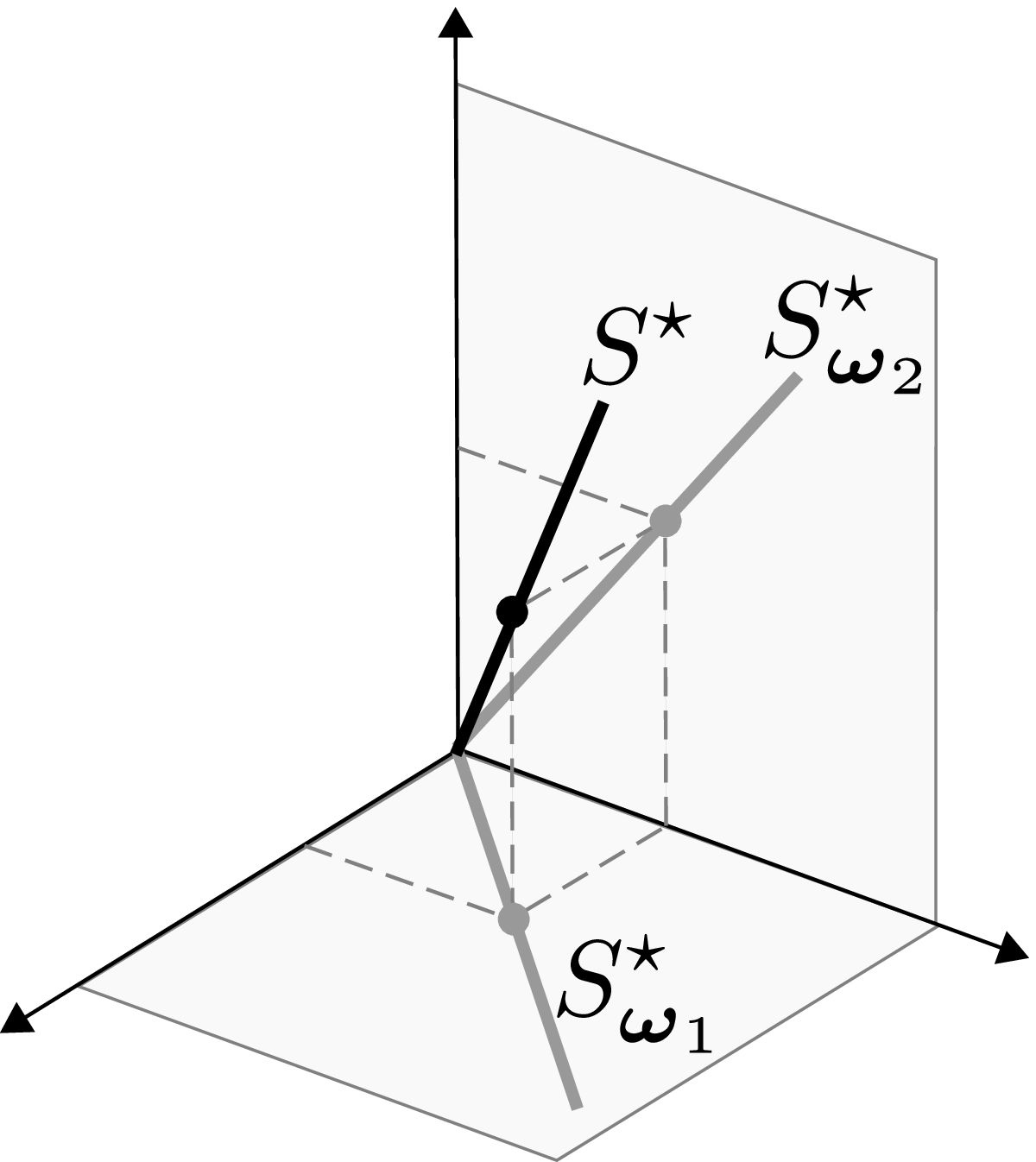}
\label{illustration}
\caption{When can $\sstar$ be identified from its canonical projections $\{\sstaroi\}_{\fix{\i}=1}^{\fix{\N}}$?}
\end{figure}

We will see that identifiability of this sort can only be possible if  $\N \geq \d-\r$, since $\ker \sstar$ is $(\d-\r)$-dimensional.  Thus, unless otherwise stated, we will also assume that:
\begin{itemize}
\phantomsection\label{AtwoAssDef}
\item[\AtwoAss]
$\OO$ has exactly $\N=\d-\r$ columns.
\end{itemize}

Let \phantomsection\label{GrDef}$\Gr(\r,\R^{\fix{\d}})$ denote the Grassmannian manifold of $\r$-dimensional subspaces in $\R^{\fix{\d}}$.  Define \phantomsection\label{SSDef}$\SS(\sstar,\OO)$ $\subset$ $\Gr(\r,\R^{\fix{\d}})$ such that every $\s \in \SS(\sstar,\OO)$ satisfies $\soi=\sstaroi$ $\forall$ $\i$.  In words, $\SS(\sstar,\OO)$ is the set of all $\r$-dimensional subspaces matching $\sstar$ on $\OO$.
\begin{myExample}
\label{introEg}
Let $\d=5$, $\r=1$,
\begin{align*}
\sstar \ = \ {\rm span}\left[\begin{matrix}
1 \\ 2 \\ 3 \\ 4 \\ 4
\end{matrix}\right] \hspace{.25cm} \text{and } \hspace{.25cm}
\OO \ = \ \left[\begin{matrix}
1 & 0 & 1 & 0 \\
1 & 1 & 0 & 0 \\
0 & 1 & 1 & 0 \\
0 & 0 & 0 & 1 \\
0 & 0 & 0 & 1 \\
\end{matrix}\right].
\end{align*}
Then, for example, 
\begin{align*}
\hyperref[soiDef]{S^\star_{\bs{\omega}_3}}={\rm span}\left[\begin{matrix} 1 \\ 3\end{matrix}\right].
\end{align*}
It is easy to see that there are infinitely many $1$-dimensional subspaces that match $\sstar$ on $\OO$.  In fact,
\begin{align*}
\SS(\sstar,\OO) \ = \ \bigg\{ {\rm span}[1 \ \ 2 \ \ 3 \ \ \alpha \ \ \alpha]^\T \ : \
\alpha \in \R \backslash \{0\} \bigg\}.
\end{align*}
However,  if we instead had $\hyperref[oiDef]{\bs{\omega}_3}=[0 \ \ 0 \ \ 1 \ \ 1 \ \ 0]^\T$, then $\sstar$ would be the only subspace in $\SS(\sstar,\OO)$.
\end{myExample}

The main result of this paper is the following theorem, which gives necessary and sufficient conditions on $\OO$ to guarantee that $\SS(\sstar,\OO)$ contains no subspace other than $\sstar$.  Our results hold for \phantomsection\label{aeDef}{\rm(\ae)} $\sstar$, with respect to the uniform measure over $\Gr(\r,\R^{\fix{\d}})$.

Given a matrix, let $\phantomsection\label{nOfDef}\nOf(\bs{\cdot})$ denote its number of columns, and \phantomsection\label{mOfDef}$\mOf(\bs{\cdot})$ the number of its {\em nonzero} rows.
\begin{framed}
\begin{myTheorem}
\label{identifiabilityThm}
Let \AoneAss\ and \AtwoAss\ hold.  For \hyperref[aeDef]{almost every} $\sstar$, $\sstar$ is the only subspace in $\SS(\sstar,\OO)$ if and only if every matrix \phantomsection\label{ODef}$\O$ formed with a subset of the columns in $\OO$ satisfies
\begin{align}
\label{identifiabilityEq}
\mOf(\O) \ \geq \ \nOf(\O) + \r.
\end{align}
\end{myTheorem}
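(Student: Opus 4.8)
The plan is to first recast the condition $\soi=\sstaroi$ as a system of linear constraints. For \ae\ $\sstar$ and each $\i$, the projection $\sstaroi$ is a hyperplane of $\R^{\r+1}$, hence carries a unique (up to scale) normal $\ai\in\R^{\r+1}$; padding $\ai$ with zeros off the support of $\oi$ yields $\hatai\in\R^{\d}$, which is orthogonal to $\sstar$. I would then show that, generically, $\s\in\SS(\sstar,\OO)$ if and only if $\hatai\perp\s$ for every $\i$: the relation $\hatai\perp\s$ forces $\soi\subseteq\sstaroi$, and equality holds since both are $\r$-dimensional for \ae\ $\s$. Thus $\SS(\sstar,\OO)$ is exactly the set of $\r$-dimensional subspaces contained in $\{\hatai\}^\perp$. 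Because $\sstar$ itself lies in this intersection and $\N=\d-\r$, the set collapses to the single point $\sstar$ precisely when the $\N$ vectors $\hatai$ are linearly independent. The theorem then becomes: for \ae\ $\sstar$, the $\{\hatai\}$ are independent if and only if \eqref{identifiabilityEq} holds for every column-subset.

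For necessity I would argue by a dimension count. Suppose \eqref{identifiabilityEq} fails for the columns of some $\O$, so $\mOf(\O)<\nOf(\O)+\r$. Restricting to the $\mOf(\O)$ nonzero rows of $\O$, every $\hatai$ coming from a column of $\O$ is supported there and is orthogonal to the restriction of $\sstar$ to those rows, which has dimension $\r$ for \ae\ $\sstar$ (as $\mOf(\O)\ge\r+1$). Hence these $\nOf(\O)$ vectors lie in a space of dimension $\mOf(\O)-\r<\nOf(\O)$ and are linearly dependent; a dependent subfamily makes the whole family $\{\hatai\}_{\i=1}^{\N}$ dependent, and by the reduction above $\sstar$ is not the only subspace in $\SS(\sstar,\OO)$.

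For sufficiency, assume \eqref{identifiabilityEq} holds for every column-subset and parametrize $\sstar$ by a generic $\d\times\r$ basis $\bs{U}$. Each entry of $\hatai$ is, up to sign, an $\r\times\r$ minor of the $(\r+1)\times\r$ row-restriction of $\bs{U}$ to $\oi$, so the $\d\times\N$ matrix $\bs{\hat A}=[\hatai]$ has polynomial entries in $\bs{U}$, and the locus where it loses column rank is the zero set of a polynomial in the entries of $\bs{U}$. It therefore suffices to produce a single $\bs{U}$ for which $\bs{\hat A}$ has full column rank, since the bad locus is then measure zero. The hypothesis is a defect form of Hall's condition: after deleting any $\r$ rows, every column-subset still has at least $\nOf$ neighbors, so a suitable deletion leaves an $\N\times\N$ pattern admitting a perfect matching between columns and rows. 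I would use such a matching to select $\N$ rows and an ordering so that the associated $\N\times\N$ submatrix of $\bs{\hat A}$ is (block) triangular with nonzero diagonal for a carefully chosen $\bs{U}$.

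The hard part will be exactly this last step. A perfect matching only yields full \emph{structural} rank; because the entries of $\bs{\hat A}$ are not free variables but minors of one common $\bs{U}$, the determinant of the selected submatrix could a priori vanish identically through cancellation among these minors. Ruling this out demands a \emph{simultaneous} choice of $\bs{U}$ and of the matching so that one diagonal product survives, and I expect to manage it by induction on $\N$: at each stage I would invoke \eqref{identifiabilityEq} to peel off a column together with a matched row while preserving the condition on the remaining pattern, and then verify that the newly introduced normal vector has a component outside the span already constructed. Showing that no degenerate cancellation occurs across this inductive extension is the crux of the whole argument.
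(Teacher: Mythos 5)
Your setup is the same as the paper's: for \ae\ $\sstar$ each projection yields a (generically all-nonzero) normal vector, padding with zeros gives a $\d\times\N$ matrix of normals (the paper's $\AA$; your $\bs{\hat{\rm A}}$), and identifiability reduces to showing that, for \ae\ $\sstar$, the columns of $\AA$ are linearly independent exactly when \eqref{identifiabilityEq} holds for every column subset. Your necessity argument is also the paper's (\liLem): the restrictions of the normals indexed by $\O$ lie in the orthogonal complement of $\sstaro$ inside $\R^{\fix{\mOf}(\fix{\O})}$, a space of dimension $\mOf(\O)-\r$, so $\mOf(\O)<\nOf(\O)+\r$ forces a dependence. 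The genuine gap is the sufficiency direction, and you say so yourself: ``showing that no degenerate cancellation occurs across this inductive extension is the crux of the whole argument.'' As written, this is a plan, not a proof. The Hall/matching step only gives \emph{structural} rank, and structural rank genuinely does not suffice here because the entries of $\AA$ are correlated minors of the single basis matrix $\bs{U}$: for instance, two columns of $\OO$ with identical support admit a perfect matching on two distinct rows, yet their normal vectors are always proportional, so the corresponding $2\times 2$ minor of $\AA$ vanishes identically for \emph{every} $\bs{U}$. (That pattern violates \eqref{identifiabilityEq}, but it shows the inference ``matching $\Rightarrow$ nonvanishing determinant for some $\bs{U}$'' is invalid in this correlated setting; your induction would have to invoke the full surplus-$\r$ condition at every peeling step, and you have not exhibited how.)

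The paper closes exactly this hole by a different device: it never constructs a witness $\bs{U}$, but argues through \mld\ subsets (\basisLem). If some subset $\A=[\,\Adp \ | \ \ai\,]$ of columns of $\AA$ is \mld, occupying $\m$ nonzero rows with $\n=\nOf(\Adp)$, and if one supposes $\n<\m-\r$, then the coefficient vector $\bb$ of the dependence --- all of whose entries are nonzero precisely \emph{because} the dependence is minimal --- satisfies both $\Bone\bb=[\,\bs{0} \ \ 1\,]^\T$ for an invertible $\n\times\n$ block $\Bone$ and one extra equation coming from a row left over by the assumption $\n<\m-\r$; combining them yields the rational identity \eqref{polyEq} in the entries of $\Bone$, and expressing those entries as linear functions of the echelon parameters $\DDstar$ of $\sstar$ turns \eqref{polyEq} into a nonzero polynomial identity $g(\DDstar)=0$, which can hold only on a measure-zero set of subspaces. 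Hence any minimal dependence must have $\n=\m-\r$, i.e.\ $\nOf(\A)=\mOf(\A)-\r+1$, which is exactly what \eqref{identifiabilityEq} forbids; \independenceLem\ and the theorem follow. In short, the minimality trick converts the ``no cancellation'' problem you got stuck on into the nonvanishing of one explicit rational function, with the all-nonzero property of $\bb$ doing the work your unfinished induction was meant to do. To complete your proof you would either have to carry out that induction in full --- essentially re-deriving \basisLem\ --- or adopt the paper's argument.
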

\end{framed}

The proof of \identifiabilityThm\ is given in \proofSec.
In words, \identifiabilityThm\ is stating that $\sstar$ is the only subspace that matches $\sstar$ in $\OO$ if and only if every subset of $\nOf$ columns of $\OO$ has at least $\nOf+\r$ nonzero rows.

\begin{myExample}
The following matrix, where \phantomsection\label{oneDef}$\one$ denotes a block of all $1$'s and \phantomsection\label{IDef}$\I$ denotes the identity matrix, satisfies the conditions of \identifiabilityThm:
\begin{align*}
\OO \ = \ \left[ \begin{array}{c}
\hspace{.3cm} \Scale[1.5]{\one} \hspace{.3cm} \\ \hline
\\
\Scale[1.5]{\I} \\ \\
\end{array}\right]
\begin{matrix}
\left. \begin{matrix} \\ \end{matrix} \right\} \r \hspace{.7cm} \\
\left. \begin{matrix} \\ \\ \\ \end{matrix} \right\} \d-\r.
\end{matrix}
\end{align*}
\end{myExample}

When the conditions of \identifiabilityThm\ are satisfied, identifying $\sstar$ becomes a trivial task: $\sstar= \ker \AA{}^\T$, with $\AA$ as defined in \proofSec.

In general, verifying the conditions on $\OO$ in \identifiabilityThm\ may be computationally prohibitive, especially for large $\d$.  However, as the next theorem states, uniform random sampling patterns will satisfy the conditions in \identifiabilityThm\ with high probability \phantomsection\label{whpDef}(\whp.).  

\begin{myTheorem}
\label{probabilityThm}
Assume \AtwoAss\ and let \phantomsection\label{epsDef}$0<\eps \leq 1$ be given.  Suppose $\r \leq \frac{\fix{\d}{}}{6}$ and that each column of $\OO$ contains at least \phantomsection\label{LDef}$\L$ nonzero entries, selected uniformly at random and independently across columns, with
\begin{align}
\label{kEq}
\textstyle \L \ \geq \ \max \left\{9 \log(\frac{\fix{\d}{}}{\fix{\eps}{}})+12, \ 2\r \right\}.
\end{align}
Then $\OO$ will satisfy the conditions of \identifiabilityThm\ with probability at least $1-\eps$.
\end{myTheorem}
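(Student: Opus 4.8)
The plan is to bound the probability that the deterministic condition of \identifiabilityThm\ \emph{fails}, and show it is at most $\eps$; the claimed success probability $1-\eps$ then follows immediately. This is a first-moment (union-bound) computation, so the work is entirely in setting up the right failure event and estimating a sum of binomial tails.

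First I would recast the failure event combinatorially. By \identifiabilityThm, the condition fails exactly when some column submatrix $\O$ of $\OO$ has $\mOf(\O) < \nOf(\O) + \r$. Letting $U$ be the set of nonzero rows of such an $\O$ and $\rho = |U| = \mOf(\O)$, this is equivalent to the existence of a row set $U$ such that at least $\rho - \r + 1$ columns of $\OO$ have all their nonzero entries inside $U$ (viewing $\OO$ as a bipartite graph between columns and coordinates, this is a failure of $(+\r)$-expansion). Before estimating, I would reduce to the case where every column has \emph{exactly} $\L$ nonzeros: giving a column more nonzeros only makes it less likely to be contained in a fixed $U$, so by a coupling the exactly-$\L$ model maximizes the failure probability.

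Next, fix a row set $U$ with $|U| = \rho$. Each of the $\N = \d-\r$ columns lands inside $U$ independently with probability $p_\rho = \binom{\rho}{\L}/\binom{\d}{\L} \le (\rho/\d)^{\L}$, so the number of such columns is $\mathrm{Binomial}(\N, p_\rho)$. Applying the tail bound $\Pr[\mathrm{Bin}(\N,p)\ge t] \le \binom{\N}{t} p^{t}$ with $t = \rho - \r + 1$, then union-bounding over the $\binom{\d}{\rho}$ choices of $U$ and over $\rho$, yields
\begin{align*}
\Pr[\text{failure}] \ \le \ \sum_{\rho=\L}^{\d-1} \binom{\d}{\rho}\binom{\N}{\rho-\r+1}\Big(\tfrac{\rho}{\d}\Big)^{\L(\rho-\r+1)}.
\end{align*}
Only $\rho \ge \L$ contribute (a set of fewer than $\L$ rows cannot contain any column), and $\rho = \d$ is excluded since all $\N=\d-\r$ columns then lie in $U$ yet $\d-\r < \d-\r+1$.

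Finally I would show this sum is at most $\eps$. Writing $\n = \rho - \r + 1$ for the number of offending columns, the hypothesis $\L \ge 2\r$ forces $\n \ge \L-\r+1 \ge \r+1$, hence $\rho = \n+\r-1 < 2\n$: the offending row set is always smaller than twice the offending column count. Using $\binom{\d}{k} \le (e\d/k)^{k}$, taking logarithms, and feeding in $\rho < 2\n$, $\r \le \d/6$, and $\L \ge 9\log(\d/\eps)+12$, I expect each summand to fall below $\eps/\d$, so that the at most $\d$ terms sum to at most $\eps$. The hard part is making this last estimate uniform across the whole range $\L \le \rho \le \d-1$: for moderate $\rho$ the factor $(\rho/\d)^{\L\n}$ supplies ample decay, but as $\rho \to \d$ this factor tends to $1$ and one must instead exploit that the exponent $\L\n$ is then of order $\L\d$ while $1-\rho/\d$ stays bounded below --- this is precisely where $\r \le \d/6$ is needed. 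Controlling this crossover region and checking that the explicit threshold $9\log(\d/\eps)+12$ suffices is where the constants in \eqref{kEq} get pinned down.
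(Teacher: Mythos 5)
Your proposal follows essentially the same route as the paper's proof: the same reformulation of failure as the existence of a row set $U$ absorbing at least $|U|-\r+1$ columns, the same union bound over row sets combined with the binomial tail estimate $\binom{\N}{t}p^t$, and the same per-summand target of $\eps/\d$ handled in two regimes (with $\L \geq 2\r$ supplying the decay for moderate $|U|$ and $\r \leq \d/6$ handling $|U|$ near $\d$, exactly as in the paper's split at $\d/2$). The only step you leave unexecuted --- verifying that the explicit constants in \eqref{kEq} drive every summand below $\eps/\d$ --- is precisely the elementary computation the paper carries out, and since your summand is bounded by the paper's (via $\binom{\fix{\d}-\fix{\r}}{\fix{\rho}-\fix{\r}+1} < \binom{\fix{\d}}{\fix{\rho}}$), it goes through with the stated constants.
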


\probabilityThm\ is proved in the \probabilityApx. Notice that $\Ord(\r \log\d)$ nonzero entries per column is a typical requirement of \LRMC\ methods, while $\Ord(\max\{\r,\log\d\})$ is sufficient for subspace identifiability.

\vspace{.3cm}
\section{Proof of \identifiabilityThm}
\label{proofSec}
\vspace{.3cm}
For any subspace, matrix or vector that is compatible with a binary vector \phantomsection\label{oDef}$\o$, we will use the subscript \phantomsection\label{cdotoDef}$\o$ to denote its restriction to the nonzero coordinates/rows in $\o$.  For \ae\ $\sstar$, $\sstaroi$ is an $\r$-dimensional subspace of $\R^{\fix{\r}+1}$, and the kernel of $\sstaroi$ is a $1$-dimensional subspace of $\R^{\fix{\r}+1}$.

\begin{myLemma}
\label{aEntriesLem}
Let \phantomsection\label{aoiDef}$\aoi \in \R^{\fix{\r}+1}$ be a nonzero element of $\ker \sstaroi$.  All entries of $\aoi$ are nonzero for \ae\ $\sstar$.
\end{myLemma}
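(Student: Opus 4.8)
The plan is to parametrize $\sstar$ by a $\d\times\r$ matrix $\bs{{\rm U}}$ whose columns span $\sstar$, and then to recognize the entries of $\aoi$ as maximal minors of an appropriate submatrix. By \AoneAss, $\oi$ has exactly $\r+1$ nonzero coordinates; let $\bs{{\rm U}}_{\oi}$ denote the $(\r+1)\times\r$ submatrix of $\bs{{\rm U}}$ obtained by keeping precisely those rows. Its column span is $\sstaroi$, and for \ae\ $\sstar$ this submatrix has full column rank $\r$, so that $\ker\sstaroi$ is one-dimensional, exactly as noted just before the lemma. The whole argument then reduces to understanding when an entry of a spanning vector of this one-dimensional kernel can vanish.

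First I would write down an explicit spanning vector of $\ker\sstaroi$. Let $\bs{{\rm U}}_{\oi}^{\setminus j}$ be the $\r\times\r$ matrix obtained by deleting row $j$ from $\bs{{\rm U}}_{\oi}$, and set $a_j=(-1)^j\det\bs{{\rm U}}_{\oi}^{\setminus j}$ for $j\in\{1,\dots,\r+1\}$. A cofactor-expansion argument shows that the inner product of the vector with entries $a_j$ against any column of $\bs{{\rm U}}_{\oi}$ equals, up to sign, the determinant of an $(\r+1)\times(\r+1)$ matrix with a repeated column, hence is zero. Thus this vector lies in $\ker\sstaroi$ and, being nonzero for \ae\ $\sstar$, spans it; consequently $\aoi$ is proportional to it, and the $j$-th entry of $\aoi$ vanishes exactly when the $\r\times\r$ minor $\det\bs{{\rm U}}_{\oi}^{\setminus j}$ vanishes.

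It remains to argue that this happens only on a negligible set. Each minor $\det\bs{{\rm U}}_{\oi}^{\setminus j}$ is a polynomial in the entries of $\bs{{\rm U}}$ that is not identically zero: choosing $\bs{{\rm U}}$ so that the $\r$ retained rows form the identity gives determinant $\pm1$. Hence $\{\det\bs{{\rm U}}_{\oi}^{\setminus j}=0\}$ is a proper algebraic subvariety of the space of parametrizations, and the union over the finitely many $j$ is again such a set. The main obstacle, and the only genuinely nonroutine point, is to transfer this conclusion from the $\bs{{\rm U}}$-parametrization to the statement ``for \ae\ $\sstar$'' with respect to the uniform measure on $\Gr(\r,\R^{\fix{\d}})$: this holds because that measure is absolutely continuous with respect to the local coordinates supplied by $\bs{{\rm U}}$, so a proper subvariety in $\bs{{\rm U}}$ stays null on the Grassmannian. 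Combined with the a.e. full-rank fact that makes $\ker\sstaroi$ one-dimensional, this shows that for \ae\ $\sstar$ every entry of $\aoi$ is nonzero.
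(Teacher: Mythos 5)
Your proof is correct, but it takes a genuinely different route from the paper's. The paper argues by contradiction and never writes down $\aoi$ explicitly: if $\aoi$ had a zero entry, then restricting to the support $\o$ of $\aoi$ (a set of at most $\r$ coordinates, since one of the $\r+1$ entries vanishes) gives $\sstaro \subset \ker \aoT$, forcing $\dim \sstaro \leq \|\o\|_1 - 1$; but for every binary $\o$ with $\|\o\|_1 \leq \r$, \ae\ $\r$-dimensional subspace satisfies $\dim \so = \|\o\|_1$, so this happens only on a null set. Your argument is constructive instead: you realize the one-dimensional kernel as the generalized cross product of the columns of the $(\r+1)\times\r$ basis submatrix $\bs{{\rm U}}_{\fix{\oi}}$, so that the $j$-th entry of $\aoi$ vanishes exactly when a specific $\r\times\r$ minor vanishes, and each such minor is a not-identically-zero polynomial whose zero set is null, with nullity transferred from the $\bs{{\rm U}}$-parametrization to the uniform measure on $\Gr(\r,\R^{\fix{\d}})$. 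Both proofs ultimately rest on the same genericity principle (non-vanishing of polynomials in a parametrization of the Grassmannian), but yours buys an explicit Cramer-type formula for $\aoi$ -- of practical interest, since the paper later recovers $\sstar$ as $\ker\AA{}^\T$ -- and makes the measure-theoretic bookkeeping explicit, whereas the paper's route buys brevity and coordinate-freeness: one projection-genericity fact disposes of all possible supports at once, with no sign conventions or chart choices. One point to tighten in your write-up: your ``retained rows form the identity'' witness lives in unconstrained $\bs{{\rm U}}$-space, which is a redundant parametrization rather than a genuine coordinate chart; the cleanest way to finish is to note that the uniform measure on $\Gr(\r,\R^{\fix{\d}})$ is the pushforward of an absolutely continuous (e.g.\ Gaussian) measure on full-rank $\d\times\r$ matrices, so a Lebesgue-null set of matrices maps to a null set of subspaces -- or else verify the minor is not identically zero within each echelon chart, which also holds.
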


\begin{proof}
Suppose $\aoi$ has at least one zero entry.  Use $\o$ to denote the binary vector of the nonzero entries of $\aoi$.  Since $\aoi$ is orthogonal to $\sstaroi$, for every $\uoi \in \sstaroi$ we have that $\aoiT \uoi=\aoT \uo=0$.  Then \phantomsection\label{sstaroDef}$\sstaro$ satisfies
\begin{align}
\label{aEntriesEq}
\dim \sstaro \ \leq \ \dim \ker \aoT \ = \ \|\o\|_1-1 \ < \ \|\o\|_1.
\end{align}
Observe that for every binary vector $\o$ with $\|\o\|_1 \leq \r$, \ae\ $\r$-dimensional subspace $\s$ satisfies $\dim \so = \|\o\|_1$.  Thus \eqref{aEntriesEq} holds only in a set of measure zero.
\end{proof}

Define \phantomsection\label{aiDef}$\ai$ as the vector in $\R^{\fix{\d}}$ with the entries of $\aoi$ in the nonzero positions of $\oi$ and zeros elsewhere.  Then $\s \subset \ker \aiT$ for every $\s \in \SS(\sstar,\OO)$ and every $\i$.  Letting \phantomsection\label{AADef}$\AA$ be the $\d \times (\d-\r)$ matrix formed with $\{\ai\}_{\fix{\i}=1}^{\fix{\d}-\fix{\r}}$ as columns, we have that $\s \subset \ker \AA{}^\T$ for every $\s\in \SS(\sstar,\OO)$.
Note that if $\dim \ker \AA{}^\T=\r$, then $\SS(\sstar,\OO)$ contains just one element, $\sstar$, which is the identifiability condition of interest. Thus, we will establish conditions on $\OO$ guaranteeing that the $\d-\r$ columns of $\AA$ are linearly independent.

Recall that for any matrix \phantomsection\label{ADef}$\A$ formed with a subset of the columns in $\AA$, $\nOf(\A)$ denotes the number of columns in $\A$, and $\mOf(\A)$ denotes the number of {\em nonzero} rows in $\A$.

\begin{myLemma}
\label{independenceLem}
For \ae\ $\sstar$, the columns of $\AA$ are linearly dependent if and only if $\nOf(\A)>\mOf(\A)-\r$ for some matrix $\A$ formed with a subset of the columns in $\AA$.
\end{myLemma}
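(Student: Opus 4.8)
The plan is to prove the two directions separately, after recording the geometric picture. By \aEntriesLem\ each column $\ai$ is supported exactly on the $\r+1$ nonzero rows of $\oi$, and by construction $\aiT\uoi=0$ for every $\uoi\in\sstaroi$, so $\ai$ is orthogonal to $\sstar$. For any submatrix $\A$ let $\o$ be the binary vector marking its nonzero rows; restricting $\A$ to these rows produces an $\mOf(\A)\times\nOf(\A)$ matrix whose columns all lie in $(\sstaro)^{\perp}\subset\R^{\mOf(\A)}$. Since every column already carries $\r+1$ nonzeros we have $\mOf(\A)\geq\r+1>\r$, so for \ae\ $\sstar$ the projection satisfies $\dim\sstaro=\r$ and hence $\dim(\sstaro)^{\perp}=\mOf(\A)-\r$. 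Only finitely many $\o$ arise (one per subset of the $\d-\r$ columns), so these generic statements hold simultaneously for \ae\ $\sstar$.

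The \emph{if} direction is then a dimension count. If $\nOf(\A)>\mOf(\A)-\r$ for some $\A$, then the $\nOf(\A)$ restricted columns all lie in the space $(\sstaro)^{\perp}$ of dimension $\mOf(\A)-\r<\nOf(\A)$, so they are linearly dependent; as $\A$ is a subset of the columns of $\AA$, the columns of $\AA$ are linearly dependent.

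For the \emph{only if} direction, assume the columns of $\AA$ are dependent and let $\A$ be a \mld\ subset, so its columns have rank $\nOf(\A)-1$ while every proper subset is independent. Restricting to the $\mOf(\A)$ nonzero rows, the columns span a subspace $W\subseteq(\sstaro)^{\perp}$ of dimension $\nOf(\A)-1$, which already yields the lower bound $\mOf(\A)-\r\geq\nOf(\A)-1$. What remains is the matching upper bound $\mOf(\A)\leq\nOf(\A)+\r-1$, equivalently $W=(\sstaro)^{\perp}$; combined with the lower bound this forces $\mOf(\A)=\nOf(\A)+\r-1$, i.e.\ $\nOf(\A)>\mOf(\A)-\r$, as required. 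Note that applying the already-proved \emph{if} direction to each (independent) proper subset shows every proper $\A'\subsetneq\A$ satisfies $\mOf(\A')\geq\nOf(\A')+\r$; this is the combinatorial minimality that must be exploited.

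The upper bound is the crux and the main obstacle. The subtlety is illustrated by a set that is dependent only through a strictly smaller dependency (for $\r=1$, a triangle sitting inside a larger edge set): such a set can have $\mOf>\nOf+\r-1$, and its columns need not span $(\sstaro)^{\perp}$, which is precisely why one passes to a \mld\ set. To show a \mld\ set cannot have $\mOf(\A)\geq\nOf(\A)+\r$, I would fix the finitely many admissible support patterns and, for each, regard the maximal minors of the restricted $\mOf(\A)\times\nOf(\A)$ matrix as polynomials in a local parametrization of $\sstar$; it then suffices to exhibit one subspace $\sstar$ for which the restricted columns span $(\sstaro)^{\perp}$, since a minor that is nonzero at one point vanishes only on a measure-zero set, and a finite union of such sets is still measure zero. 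The delicate part, where the extra $\r$ rows must be spent, is that the columns are not free but are pinned up to scale by a single subspace $\sstar$ through all their overlapping supports, so the nondegenerate configuration has to be produced compatibly with one genuine $\sstar$; I expect to build it by induction on $\nOf(\A)$, deleting a column to obtain an independent (hence, by the \emph{if} direction, ``good'') proper subset and extending a spanning configuration across the reinstated column.
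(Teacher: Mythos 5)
Your \emph{if} direction is correct and is exactly the paper's argument (\liLem): the restricted columns of $\A$ all lie in $(\sstaro)^\perp$, which has dimension $\mOf(\A)-\r$ for \ae\ $\sstar$, so $\nOf(\A)>\mOf(\A)-\r$ forces dependence. Your reduction of the \emph{only if} direction is also the paper's: pass to a \mld\ subset, get the lower bound $\mOf(\A)\geq\nOf(\A)+\r-1$ from the rank count, and reduce everything to the upper bound $\mOf(\A)\leq\nOf(\A)+\r-1$ for \mld\ sets. But that upper bound --- which you yourself call the crux --- is precisely the paper's \basisLem, whose proof is the technical heart of \proofSec, and your proposal never proves it: it ends with a plan (``I would fix the finitely many support patterns\dots'', ``I expect to build it by induction\dots''). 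In the paper this step is done by writing $\Adp\bb=\ai$ as in \eqref{aLdOnAEq}, using the \mld\ structure to show the block $\B$ of rows off the support of $\ai$ has independent columns, extracting an invertible $\Bone$, and observing that if $\nOf<\mOf-\r$ there is an \emph{extra} row which forces the rational identity \eqref{polyEq}; this identity is then converted, via the parametrization \eqref{AAstarEq} of $\Gr(\r,\R^{\fix{\d}})$ by $\DDstar$, into a nonzero polynomial equation $g(\DDstar)=0$, which holds only on a measure-zero set.

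The plan you sketch in place of this does not close the gap, for two reasons. First, the inductive step is circular: ``extending a spanning configuration across the reinstated column'' means showing that for some (hence \ae) $\sstar$ the reinstated column lies outside the span of the remaining columns --- but that is exactly the generic-independence statement being proven, and the difficulty you correctly identify (all columns are pinned, up to scale, by one subspace $\sstar$ through overlapping supports) is not reduced by deleting and reinstating a column. Indeed, if $\ai=\Adp\bb$ then the rows of $\Adp$ outside the support of $\ai$ annihilate $\bb$, and the hereditary condition $\mOf(\A')\geq\nOf(\A')+\r$ only guarantees about $\nOf(\A')-1$ such rows per subset after deleting the $\r+1$ support rows; so the naive Hall-type count fails and one needs the finer orthogonality argument (the step in \basisLem\ showing $\B$ has independent columns, which reuses the idea of \aEntriesLem). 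Second, a smaller slip: your witness criterion ``exhibit one $\sstar$ for which the restricted columns span $(\sstaro)^\perp$'' is unachievable whenever $\mOf(\A)>\nOf(\A)+\r$ strictly, since $\nOf(\A)$ vectors cannot span a space of dimension $\mOf(\A)-\r>\nOf(\A)$; what your strategy actually requires is a witness where the columns are merely independent (some maximal minor nonzero). That is fixable, but producing such a witness for every pattern whose subsets all satisfy $\mOf\geq\nOf+\r$ is not easier than \basisLem\ --- it \emph{is} \basisLem. So the proposal is a correct reduction plus a plausible high-level strategy (per-pattern polynomial non-vanishing over a parametrization of the Grassmannian, which is indeed the flavor of the paper's argument), but the central lemma is left unproven.
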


We will show \independenceLem\ using Lemmas \ref{liLem} and \ref{basisLem} below.  Let \phantomsection\label{liOfDef}$\liOf(\A)$ be the largest number of linearly independent columns in $\A$, i.e., the column rank of $\A$.

\begin{myLemma}
\label{liLem}
For \ae\ $\sstar$, $\liOf(\A) \leq \mOf(\A) - \r$.
\end{myLemma}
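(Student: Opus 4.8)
The plan is to exploit the orthogonality relation established just before the lemma: every column of $\AA$, hence every column of $\A$, is orthogonal to $\sstar$. First I would let $\o$ be the binary vector indicating the nonzero rows of $\A$, so that $\|\o\|_1=\mOf(\A)$. Since each column $\ai$ of $\A$ has exactly $\r+1$ nonzero entries (by \aEntriesLem) and $\A$ is nonempty, we have $\mOf(\A)\geq \r+1>\r$, and every column of $\A$ is supported within the coordinates indicated by $\o$.

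Next I would restrict everything to the rows in $\o$. Because $\aiT u=0$ for every $u\in\sstar$ and $\ai$ is supported on $\o$, the restricted vector $(\ai)_{\o}$ satisfies $(\ai)_{\o}^\T u_{\o}=\aiT u=0$; that is, $(\ai)_{\o}$ is orthogonal to $\sstaro$, the restriction of $\sstar$ to the coordinates in $\o$. Thus, after deleting the identically zero rows, the columns of $\A$ become vectors in $\R^{\fix{\mOf(\A)}}$ that all lie in the orthogonal complement $(\sstaro)^\perp$.

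The crux is then a genericity statement, the counterpart of the fact already invoked inside the proof of \aEntriesLem: for \ae\ $\sstar$, the projection $\sstaro$ of an $\r$-dimensional subspace onto any set of $\mOf(\A)\geq\r$ coordinates is itself $\r$-dimensional. This holds because $\dim\sstaro=\r-\dim(\sstar\cap W)$, where $W$ is the coordinate subspace vanishing on $\o$, and a generic $\r$-dimensional subspace meets the fixed $(\d-\mOf(\A))$-dimensional subspace $W$ only at the origin whenever $\mOf(\A)\geq\r$ (the subspaces with nontrivial intersection form a measure-zero Schubert subvariety of the Grassmannian, cut out by the vanishing of suitable minors of a basis matrix). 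Granting this, $(\sstaro)^\perp$ has dimension $\mOf(\A)-\r$, and since every column of $\A$ lives in this space, $\liOf(\A)\leq\mOf(\A)-\r$.

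Finally I would observe that the exceptional set is a finite union — over the finitely many subsets of columns of $\AA$ defining the possible $\A$, together with the measure-zero exception from \aEntriesLem — and hence still has measure zero, so the bound holds simultaneously for all $\A$ for \ae\ $\sstar$. The main obstacle is making the genericity claim of the third paragraph precise: one must argue that demanding $\dim\sstaro=\r$ for a fixed coordinate set excludes only a measure-zero set of subspaces, uniformly over the finitely many coordinate sets $\o$ that arise.
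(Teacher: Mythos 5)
Your proposal is correct and is essentially the paper's own argument: the paper also restricts $\A$ to its nonzero rows, invokes the genericity fact $\dim \sstaro = \r$ for \ae\ $\sstar$, and concludes from $\sstaro \subset \ker \AoT$ that $\r \leq \mOf(\A)-\liOf(\A)$, which is just the kernel-side (dual) phrasing of your observation that the restricted columns lie in $(\sstaro)^\perp$. The extra details you supply — the Schubert-variety justification of genericity and the finite union over column subsets — are left implicit in the paper but do not change the approach.
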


\begin{proof}
Let $\o$ be the binary vector of nonzero rows of $\A$, and \phantomsection\label{AoDef}$\Ao$ be the $\mOf(\A) \times \nOf(\A)$ matrix formed with these rows.

For \ae\ $\sstar$, $\dim \sstaro=\r$.  Since $\sstaro \subset \ker \AoT$, $\r=\dim \sstaro \leq \dim \ker \AoT = \mOf(\A)-\liOf(\A) $.
\end{proof}

We say $\A$ is \phantomsection\label{mldDef}{\em \mld} if the columns in $\A$ are linearly dependent, but every proper subset of the columns in $\A$ is linearly independent.

\begin{myLemma}
\label{basisLem}
Let $\A$ be \mld.  Then for \ae\ $\sstar$, $\nOf(\A) = \mOf(\A) - \r + 1$.
\end{myLemma}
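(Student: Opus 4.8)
The plan is to establish the two inequalities $\nOf(\A)\le\mOf(\A)-\r+1$ and $\nOf(\A)\ge\mOf(\A)-\r+1$ separately; the first is immediate, and the second, equivalently $\mOf(\A)\le\nOf(\A)+\r-1$, is the crux. Because $\A$ is \mld, its columns satisfy a dependence $\sum_i c_i\ai=\bs 0$ in which every coefficient $c_i$ is nonzero -- a vanishing coefficient would exhibit a dependence among a proper subset, contradicting minimality. Hence removing any one column leaves $\nOf(\A)-1$ independent columns, so $\liOf(\A)=\nOf(\A)-1$, and combining this with \liLem\ gives $\nOf(\A)-1=\liOf(\A)\le\mOf(\A)-\r$, i.e. $\nOf(\A)\le\mOf(\A)-\r+1$.

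Before attacking the reverse inequality I would record two structural consequences of the all-nonzero dependence. First, every nonzero row of $\A$ has at least two nonzero entries: a row whose only nonzero entry sits in column $i$ would force $c_i=0$. Second, the column supports form a connected pattern on the $\mOf(\A)$ nonzero rows, for otherwise the dependence would split along a partition of the columns into groups with disjoint row-supports, each group's partial sum vanishing separately and so yielding a dependence inside a proper subset. In particular no nonzero row belongs to a single column, so deleting a column never removes a nonzero row.

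For the reverse inequality $\mOf(\A)\le\nOf(\A)+\r-1$ the almost-everywhere hypothesis is essential, since for exceptional $\sstar$ a minimally dependent family can have strictly more nonzero rows. The plan is to use the explicit form of the columns: fixing a generic $\d\times\r$ basis $U$ of $\sstar$, each $\ai$ is, up to scale, the unique dependence among the $\r+1$ rows of $U$ indexed by $\oi$; restricted to the $\mOf(\A)$ nonzero rows $\o$ these are circuit vectors of the rank-$\r$ uniform matroid carried by the rows of $U$, all lying in the $(\mOf(\A)-\r)$-dimensional space $(\sstaro)^\perp$. Since the family is dependent, its span has dimension $\nOf(\A)-1\le\mOf(\A)-\r$, and I would prove equality by showing that a support-connected, minimally dependent family of such circuit vectors must span all of $(\sstaro)^\perp$. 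Equivalently, assuming $\mOf(\A)\ge\nOf(\A)+\r$ (so that $(\sstaro)^\perp$ has dimension at least $\nOf(\A)$, leaving room to spare) one shows that for \ae\ $\sstar$ the $\nOf(\A)$ columns are in fact independent, contradicting their dependence. Concretely this amounts to exhibiting an $\nOf(\A)\times\nOf(\A)$ minor of $\Ao$ that, as a polynomial in the entries of $U$, is not identically zero; such a minor would be built greedily, adding columns one at a time and using that every proper subset is independent -- which via \liLem\ forces each sub-support $J$ to satisfy $|\mathrm{supp}(J)|\ge|J|+\r$ -- together with the surplus $\mOf(\A)-\r\ge\nOf(\A)$ to supply a fresh pivot row at each step.

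I expect this last step to be the main obstacle. The entries of the $\ai$ are not independent parameters but correlated maximal minors of the single matrix $U$, so the usual generic-rank criterion for patterned matrices (maximum bipartite matching of the zero pattern) does not apply directly, and one must instead propagate independence of proper subsets to the whole family through the subspace structure, e.g.\ via the circuit-elimination / Pl\"ucker relations of the uniform matroid. A promising alternative is an induction on $\nOf(\A)$ that peels off a degree-two nonzero row -- whose two incident columns can be combined into a single circuit vector on a contracted support -- thereby reducing to a smaller minimally dependent family while tracking the change in $\mOf$ and $\nOf$; verifying that such a reduction preserves minimality and the counting identity for \ae\ $\sstar$ is the delicate part.
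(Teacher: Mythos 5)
Your first inequality is correct and coincides with the easy half of the paper's argument: minimality gives $\liOf(\A)=\nOf(\A)-1$, and \liLem\ then yields $\nOf(\A)\le\mOf(\A)-\r+1$. The problem is the reverse inequality, and there your proposal is a plan rather than a proof. You reduce everything to the claim that, for \ae\ $\sstar$, a \mld\ family whose support pattern has surplus $\mOf(\A)\ge\nOf(\A)+\r$ cannot exist because its columns would be generically independent; but proving that genericity means exhibiting a single polynomial in free coordinates of the Grassmannian which is \emph{not identically zero} and which must vanish whenever the dependence occurs. You explicitly concede this step ("the main obstacle"), correctly noting that the entries of the $\ai$ are correlated minors of one basis matrix $U$, so pattern-based generic-rank criteria do not apply. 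That conceded step is not a finishing detail; it is the entire content of the lemma, since everything else you prove holds for every $\sstar$, not just almost every one. Your fallback induction (peel a row with two nonzero entries and merge its two incident columns) also does not obviously work: the merged vector has up to $2\r$ nonzero entries, so it is no longer a kernel vector of a projection onto $\r+1$ coordinates, the reduced family is no longer a submatrix of $\AA$, and both minimality and the \ae\ statement would have to be re-established for this broader class of vectors --- which reopens exactly the genericity question you set out to avoid.

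For comparison, the paper closes this gap with an explicit construction. Write $\A=[\,\Adp\,|\,\ai\,]$, so the dependence is $\Adp\bb=\ai$ with every entry of $\bb$ nonzero; restrict attention to the rows where $\ai$ vanishes together with the row rescaled to $1$, forming the block $\B$ of $\Adp$. A measure-zero argument as in \aEntriesLem\ shows the columns of $\B$ are independent for \ae\ $\sstar$, so under the contradiction hypothesis $\n<\m-\r$ one can extract an invertible $\n\times\n$ block $\Bone$ containing the row of the $1$; then $\bb$ is the last column of $\BoneInv$, and a surplus row $[\,\one\,|\,\bs{0}\,]$ --- which exists precisely because $\n<\m-\r$ --- forces the rational equation $\f(\Bone)=0$ of \eqref{polyEq}, not identically zero in the entries of $\Bone$. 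The decisive device your proposal lacks comes last: parametrize \ae\ $\sstar$ as $\sstar=\ker\AAstar{}^\T$ with $\AAstar$ in column echelon form, so that $\sstar$ corresponds bijectively to a free parameter matrix $\DDstar\in\R^{\fix{\r}\times(\fix{\d}-\fix{\r})}$ and the entries of $\Bone$ become \emph{linear} functions of the entries of $\DDstar$. Then \eqref{polyEq} becomes a nonzero polynomial identity $g(\DDstar)=0$, which holds only on a set of measure zero, giving the contradiction. In short, it is the choice of kernel (echelon) coordinates --- rather than your basis $U$, whose maximal minors are exactly the correlated quantities that blocked you --- that turns your acknowledged obstacle into a routine vanishing-set argument; without some such device your argument does not close.
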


\begin{proof}
Let \phantomsection\label{AdpDef}$\A=[ \ \Adp \ | \ \ai \ ]$ be \mld.
Let \phantomsection\label{mDef}$\m=\mOf(\Adp)$, \phantomsection\label{nDef}$\n = \nOf(\Adp)$, and \phantomsection\label{liDef}$\li=\liOf(\Adp)$.
Define \phantomsection\label{bbDef}$\bb \in \R^{\fix{\n}}$ such that
\begin{align}
\label{aLdOnAEq}
\Adp \bb \ = \ \ai \ .
\end{align}
Note that because $\A$ is \mld, all entries in $\bb$ are nonzero.
Since the columns of $\Adp$ are linearly independent, $\n=\li$. Thus, by \liLem, $\n \leq \m-\r$.  We want to show that $\n=\m-\r$, so suppose for contradiction that $\n<\m-\r$.

We can assume without loss of generality that $\Adp$ has all its zero rows (if any) in the first positions.  In that case, since $\A$ is \mld, it follows that the nonzero entries of $\ai$ cannot be in the corresponding rows.  Thus, without loss of generality, assume that $\ai$ has its first $\r$ nonzero entries in the first $\r$ nonzero rows of $\Adp$, and that the last nonzero entry of $\ai$ is $1$ (i.e., rescale $\ai$ if needed), and is located in the last row.  Let \phantomsection\label{hataiDef}$\hatai \in \R^{\fix{\r}}$ denote the vector with the first nonzero entries of $\ai$, such that we can write:
\begin{align}
\label{firstDecompositionAEq}
\left[\begin{array}{c|c}
\Adp & \ai
\end{array}\right]
=\left[\begin{matrix} \\ \\ \\ \\ \\ \\ \\ \\ \end{matrix}\right.
\underbrace{\begin{array}{c|}
\hspace{.5cm} \Scale[1.5]{\bs{0}} \hspace{.5cm} \\ \hline
\\ \Scale[1.5]{\bs{\C}} \\ \\ \hline
\\ \multirow{2}{*}{\Scale[1.5]{\bs{\B}}} \\ \\ \\
\end{array}}_{\fix{\n}{}}
\underbrace{
\begin{array}{c}
\vspace{.05cm} \Scale[1]{\bs{0}} \\ \hline
\\ \Scale[1]{\bs{\hatai}} \\ \vspace{.05cm} \\ \hline
\\ \Scale[1]{\bs{0}} \\ \\ \hline
1 \\
\end{array}}_{1}
\left.\begin{matrix} \\ \\ \\ \\ \\ \\ \\ \\ \end{matrix}\right]
\begin{matrix}
\left. \begin{matrix} \\ \end{matrix} \right\} \d-\m \hspace{.6cm} \\
\left. \begin{matrix} \\ \\ \\ \end{matrix} \right\} \r \hspace{1.2cm} \\
\left. \begin{matrix} \\ \\ \\ \end{matrix} \right\} \m-\r-1 \\
\left. \begin{matrix} \\ \end{matrix} \right\} 1, \hspace{1.1cm}
\end{matrix}
\end{align}
where $\C$ and $\B$ are submatrices used to denote the blocks of $\Adp$ corresponding to the partition of $\ai$.

The columns of $\B$ are linearly independent.  To see this, suppose for contradiction that they are not.  This means that there exists some nonzero $\bs{\gamma} \in \R^{\fix{\n}}$, such that $\B \bs{\gamma} = 0$.  Let $\bs{c}=\Adp \bs{\gamma} $ and note that only the $\r$ rows in $\bs{c}$ corresponding to the block $\C$ may be nonzero.  Let $\o$ denote the binary vector of these nonzero entries. Since $\sstar$ is orthogonal to every column of $\Adp$ and $\bs{c}$ is a linear combination of the columns in $\Adp$, it follows that $\sstaro \subset \ker \bs{c}^\T_{\fix{\o}}$. This implies that $\dim \sstaro \leq \dim \ker \bs{c}^\T_{\fix{\o}} = \|\o\|_1-1$.  As in the proof of \aEntriesLem, this implies that the columns of $\B$ are linearly dependent only in a set of measure zero.

Going back to \eqref{firstDecompositionAEq}, since the $\n$ columns of $\B$ are linearly independent and because we are assuming that $\n<\m-\r$, it follows that $\B$ has $\n$ linearly independent rows.  Let \phantomsection\label{BoneDef}$\Bone$ denote the $\n \times \n$ block of $\B$ that contains $\n$ linearly independent rows, and \phantomsection\label{BtwoDef}$\Btwo$ the $(\m-\n-\r) \times \n$ remaining block of $\B$.

Notice that the row of $\B$ corresponding to the $1$ in $\ai$ must belong to $\Bone$, since otherwise, we have that $\Bone \bb =0$, with $\bb$ as in \eqref{aLdOnAEq}, which implies that $\Bone$ is rank deficient, in contradiction to its construction.

We can further assume without loss of generality that the first nonzero entry of every column of $\B$ is $1$ (otherwise we may just rescale each column), and that these nonzero entries are in the first columns (otherwise we may just permute the columns accordingly).  We will also let \phantomsection\label{BtwoTildeDef}$\BtwoTilde$ denote all but the first row of $\Btwo$. Thus, our matrix is organized as
\begin{align}
\small
\label{AconstructionEq}
\left[\begin{array}{c|c}
\Adp & \ai
\end{array}\right] =
\begin{matrix}
\begin{matrix} \\\vspace{.1cm} \\ \\ \\ \\  \end{matrix} \\
\left. \begin{matrix} \\ \\ \vspace{.3cm} \\ \\ \end{matrix}  \Btwo\right\{ \\
\begin{matrix} \\ \\ \\ \\ \end{matrix}
\end{matrix}
\left[\begin{array}{cc|c}
\multicolumn{2}{c|}{\hspace{.75cm} \Scale[1.5]{\bs{0}} \hspace{.75cm}} & \bs{0}\\ \hline
&&\\ \multicolumn{2}{c|}{\Scale[1.5]{\C}} & \hatai \\&&\\ \hline
\multicolumn{1}{c|}{\hspace{.3cm} \Scale[1]{\one} \hspace{.3cm}} & \hspace{.1cm} \Scale[1]{\bs{0}} & 0 \\ \hline
&&\\ \multicolumn{2}{c|}{\Scale[1.5]{\BtwoTilde}} & \bs{0} \\&&\\ \hline
&&\multirow{2}{*}{\Scale[1]{\bs{0}}} \\ 
 \multicolumn{2}{c|}{\Scale[1.5]{\Bone}} & \\ \cline{3-3}&&1\\
\end{array} \right]
\begin{matrix}
\left. \begin{matrix} \\ \end{matrix} \right\} \d-\m \hspace{.5cm} \\
\left. \begin{matrix} \\ \\ \vspace{.1cm} \\ \end{matrix} \right\} \r \hspace{1.1cm} \\
\left. \begin{matrix} \\ \end{matrix} \right\} 1 \hspace{1.1cm} \\
\left. \begin{matrix} \\ \\ \vspace{.2cm} \\ \end{matrix} \right\} \begin{array}{l}\m-\n \\ -\r-1 \\ \geq 0\end{array} \\
\left. \begin{matrix} \\ \\ \end{matrix} \right\} \n-1 \hspace{.5cm} \\
\left. \begin{matrix} \\ \end{matrix} \right\} 1. \hspace{0.9cm}
\end{matrix}
\normalsize
\end{align}

Now \eqref{aLdOnAEq} implies $\Bone \bb = [ \hspace{.1cm} \bs{0} \hspace{.1cm} | \hspace{.1cm} 1 \hspace{.1cm}]^\T$, and since $\Bone$ is full rank, we may write
\begin{align*}
\bb \ = \ \BoneInv \left[\begin{matrix} \bs{0} \\ 1 \end{matrix}\right],
\end{align*}
i.e., $\bb$ is the the last column of the inverse of $\Bone$, which is a rational function in the elements of $\Bone$.

Next, let us look back at \eqref{aLdOnAEq}.  If $\n<\m-\r$, then using the additional row $[ \hspace{.1cm} \one \hspace{.1cm} | \hspace{.1cm} \bs{0} \hspace{.1cm} ]$ of  \eqref{AconstructionEq} (which does not appear if $\m=\n+\r$) we obtain $[ \hspace{.1cm} \one \hspace{.1cm} | \hspace{.1cm} \bs{0} \hspace{.1cm}] \bb  = 0$.   Recall that all the entries of $\bb$ are nonzero.  
Thus, the last equation defines the following nonzero rational function in the elements of $\Bone$:
\begin{align}
\label{polyEq}
\left[\begin{array}{c|r} \one & \bs{0} \end{array}\right] \BoneInv\left[\begin{matrix} \bs{0} \\ 1 \end{matrix}\right] \ = \ 0 .
\end{align}
Equivalently, \eqref{polyEq} is a polynomial equation in the elements of $\Bone$, which we will denote as $\f(\Bone)=0$.

Next note that for \ae\ $\sstar$, we can write \phantomsection\label{AAstarDef}$\sstar=\ker \AAstar{}^\T$ for a unique $\AA^\star \in \R^{\fix{\d} \times (\fix{\d}-\fix{\r})}$ in column echelon form\footnote{Certain $\sstar$ may not admit this representation, e.g., if $\sstar$ is orthogonal to certain canonical coordinates, which, as discussed in \aEntriesLem, is not the case for almost every $\sstar$ in $\Gr(\r,\R^{\fix{\d}})$.}:
\begin{align}
\label{AAstarEq}
\AAstar \ = \ \left[ \begin{array}{c}
\\ \Scale[2]{\I} \\ \\ \hline
\hspace{.4cm} \Scale[1.5]{\bs{\DDstar}} \hspace{.2cm} \\
\end{array}\right]
\begin{matrix}
\left. \begin{matrix} \\ \\ \\ \end{matrix} \right\} \d-\r \\
\left. \begin{matrix} \\ \end{matrix} \right\} \r \hspace{.5cm}.
\end{matrix}
\end{align}
On the other hand every $\DDstar \in \R^{\fix{\r} \times (\fix{\d}-\fix{\r})}$ defines a unique $\r$-dimensional subspace of $\R^{\fix{\d}}$, via \eqref{AAstarEq}.  Thus, we have a bijection between $\R^{\fix{\r} \times (\fix{\d}-\fix{\r})}$ and a dense open subset of $\Gr(\r,\R^{\fix{\d}})$.

Since the columns of $\Adp$ must be linear combinations of the columns of $\AAstar$, the elements of $\Bone$ are linear functions in the entries of $\DDstar$.  Therefore, we can express $\f(\Bone)$ as a nonzero polynomial function $g$ in the entries of $\DDstar$ and rewrite \eqref{polyEq} as $g(\DDstar)=0$.  But we know that $g(\DDstar) \neq 0$ for almost every $\DDstar \in \R^{\fix{\r} \times (\fix{\d}-\fix{\r})}$, and hence for almost every $\sstar \in \Gr(\r,\R^{\fix{\d}})$. We conclude that almost every subspace in $\Gr(\r,\R^{\fix{\d}})$ will not satisfiy \eqref{polyEq}, and thus $\n = \m - \r$.
\end{proof}

We are now ready to present the proofs of \independenceLem\ and \identifiabilityThm.

\begin{proof}(\independenceLem)
\begin{itemize}
\item[($\Rightarrow$)]
Suppose $\A$ is \mld.  By \basisLem, $\nOf(\A)=\mOf(\A)-\r+1>\mOf(\A)-\r$, and we have the first implication.

\item[($\Leftarrow$)]
Suppose there exists an $\A$ with $\nOf(\A)>\mOf(\A)-\r$.  By \liLem, $\nOf(\A)>\liOf(\A)$, which implies the columns in $\A$, and hence $\AA$, are linearly dependent.
\end{itemize}
\end{proof}


\begin{proof}(\identifiabilityThm)
\aEntriesLem\ shows that for \ae\ $\sstar$, the $(\j,\i)^{th}$ entry of $\AA$ is nonzero if and only if the $(\j,\i)^{th}$ entry of $\OO$ is nonzero.

\begin{itemize}
\item[($\Rightarrow$)]
Suppose there exists an $\O$ such that $\mOf(\O) < \nOf(\O)+\r$. Then $\mOf(\A) < \nOf(\A)+\r$ for some $\A$.  \independenceLem\ implies that the columns of $\A$, and hence $\AA$, are linearly dependent.  This implies  $\dim \ker\AA{}^\T > \r$.

\item[($\Leftarrow$)]
Suppose every $\O$ satisfies $\mOf(\O) \geq \nOf(\O)+\r$. Then $\mOf(\A) \geq \nOf(\A)+\r$ for every $\A$, including $\AA$. Therefore, by \independenceLem, the $\d-\r$ columns in $\AA$ are linearly independent, hence $\dim \ker \AA{}^\T=\r$.
\end{itemize}
\end{proof}

\vspace{.3cm}
\section{Implications for low-rank matrix completion}
\label{LRMCSec}
\vspace{.3cm}
Subspace identifiability is closely related to the \phantomsection\label{LRMCDef}low-rank matrix completion (\LRMC) problem \cite{candes}: given a subset of entries in a rank-$\r$ matrix, exactly recover {\em all} of the missing entries.  This requires, implicitly, idenficiation of the subspace spanned by the complete columns of the matrix.   We use this section to present the implications of our results for \LRMC.

Let \phantomsection\label{XDef}$\X$ be a $\d \times \N$, rank-$\r$ matrix and assume that
\begin{itemize}
\phantomsection\label{AthreeAssDef}
\item[\AthreeAss]
The columns of $\X$ are drawn independently according to \phantomsection\label{nuuDef}$\nuu$, an absolutely continuous distribution with respect to the Lebesgue measure on $\sstar$.
\end{itemize}
Let \phantomsection\label{XODef}$\XO$ be the incomplete version of $\X$, observed only in the nonzero positions of $\OO$.

\subsection*{Necessary and sufficient conditions for \LRMC}

To relate the \LRMC\ problem to our main results, define \phantomsection\label{NtildeDef}$\Ntilde$ as the number of {\em distinct} columns (sampling patterns) in $\OO$, and let \phantomsection\label{OTildeDef}$\OTilde$ denote a $\d \times \Ntilde$ matrix composed of these columns.

\begin{myCorollary}
\label{LRMCnecCor}
If $\OTilde$ does not contain a $\d \times (\d-\r)$ submatrix satisfying the conditions of \identifiabilityThm, then $\X$ cannot be uniquely recovered from $\XO$.
\end{myCorollary}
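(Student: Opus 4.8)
The plan is to reduce non-uniqueness of the completion to non-identifiability of the underlying subspace, and then to exhibit an explicit competing matrix. First I would translate the hypothesis into a statement about $\sstar$. Form the $\d \times \Ntilde$ matrix $\widetilde{\AA}$ whose columns are the kernel generators $\ai$ attached to the distinct patterns in $\OTilde$ (one per distinct column, since repeated columns of $\OO$ yield identical restrictions). By \liLem\ we always have $\liOf(\widetilde{\AA}) \leq \d-\r$, so $\dim \ker \widetilde{\AA}{}^\T = \r$ exactly when $\widetilde{\AA}$ has $\d-\r$ linearly independent columns. By \identifiabilityThm\ (equivalently \independenceLem), any fixed choice of $\d-\r$ columns of $\widetilde{\AA}$ is linearly independent precisely when the corresponding $\d \times (\d-\r)$ submatrix of $\OTilde$ satisfies \eqref{identifiabilityEq}. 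Hence the assumption that $\OTilde$ contains \emph{no} such submatrix is equivalent to $\dim \ker \widetilde{\AA}{}^\T > \r$, which for \ae\ $\sstar$ furnishes a subspace $S' \neq \sstar$ with $S'_{\oi} = \sstaroi$ for every distinct pattern $\oi$. This reduction --- matching identifiability from the $\Ntilde$ distinct (and possibly more than $\d-\r$) patterns to the existence of a \emph{single} good $\d \times (\d-\r)$ submatrix --- is the step I expect to carry the real content.

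Next I would build a rank-$\r$ matrix that agrees with $\XO$ yet differs from $\X$. Fix the alternative subspace $S'$ produced above. For each observed column of $\X$, its restriction to the support of $\oi$ lies in $\sstaroi = S'_{\oi}$; since $S'$ is $\r$-dimensional and its image $S'_{\oi}$ is also $\r$-dimensional (for \ae\ $\sstar$), coordinate restriction is a linear \emph{bijection} from $S'$ onto $S'_{\oi}$. Consequently there is a unique column $\x'_i \in S'$ whose entries on $\oi$ equal the observed entries of the $i$th column of $\X$. Assembling these gives a matrix $\X'$ all of whose columns lie in the $\r$-dimensional subspace $S'$, so $\X'$ has rank at most $\r$, and $\X'_{\bs{\Omega}} = \XO$ by construction.

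Finally I would show $\X' \neq \X$. For each $i$ the columns $\x'_i \in S'$ and $\x_i \in \sstar$ coincide on $\oi$; if they were equal, then $\x_i$ would lie in $S' \cap \sstar$, a subspace of dimension at most $\r-1$ and therefore of $\nuu$-measure zero in $\sstar$. Under \AthreeAss\ the columns are drawn independently from the absolutely continuous $\nuu$, so almost surely no column falls in this intersection; hence $\x'_i \neq \x_i$ for every $i$ (they differ exactly in the single unobserved coordinate), and $\X' \neq \X$. Since $\X'$ has rank at most $\r$ and matches $\X$ on every sampled entry, $\X$ is not the unique rank-$\r$ matrix consistent with $\XO$. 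Given the reduction of the first paragraph, the lift via the restriction bijection and the almost-sure distinctness argument are routine.
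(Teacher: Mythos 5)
Your argument is correct, and its core reduction is the same one the paper uses: a subspace that matches $\sstar$ on every distinct sampling pattern \fits\ every observed column, so failure of the combinatorial condition produces a competing subspace and hence a competing low-rank completion. The difference is how much is proved versus asserted. The paper's entire proof is three sentences, ending with ``there will exist multiple subspaces that \fit\ $\XO$, whence $\X$ cannot be uniquely recovered''; you supply proofs of both claims hidden in that sentence. Your kernel-matrix step (stacking the vectors $\ai$ for the distinct patterns into $\widetilde{\bs{{\rm A}}}$ and applying \liLem\ and \independenceLem\ to its $\d\times(\d-\r)$ submatrices, with \aEntriesLem\ linking supports of $\widetilde{\bs{{\rm A}}}$ to those of $\OTilde$) is in effect the necessity half of the generalized theorem in the appendix---something the paper implicitly leans on, since \identifiabilityThm\ itself only covers matrices with exactly $\d-\r$ columns. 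And your lift of each observed column into $S'$, followed by the measure-zero argument showing $\X'\neq\X$ almost surely, makes the final ``whence'' rigorous: multiple fitting subspaces do not by themselves exhibit a second completion.

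Two points to tighten. First, $\dim\ker\widetilde{\bs{{\rm A}}}{}^\T>\r$ only guarantees $S'_{\oi}\subseteq\sstaroi$ for an $\r$-dimensional $S'$ inside the kernel, whereas your lifting step needs equality; an arbitrary such $S'$ can restrict to a proper subspace of $\sstaroi$. The fix is a short genericity argument: each set $\{v\in\ker\widetilde{\bs{{\rm A}}}{}^\T : v_{\oi}=0\}$ has codimension $\r$ in the kernel, so a generic $\r$-dimensional subspace of the kernel meets every one of these finitely many sets trivially and differs from $\sstar$, and any such choice satisfies $S'_{\oi}=\sstaroi$ for all $\i$. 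Second, the parenthetical that $\x'_i$ and $\x_i$ ``differ exactly in the single unobserved coordinate'' is a slip---under \AoneAss\ each column has $\d-\r-1$ unobserved coordinates---but nothing in your argument depends on it.
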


Since $\X$ is rank-$\r$, a column with fewer than $\r$ observed entries cannot be completed (in general).  We will thus assume without loss of generality the following relaxation of \AoneAss:
\begin{itemize}
\phantomsection\label{AonepAssDef}
\item[\AonepAss]
$\OO$ has at least $\r$ nonzero entries per column.
\end{itemize}

\begin{myCorollary}
\label{LRMCsuffCor}
Let \AonepAss\ and \AthreeAss\ hold.  Suppose $\OTilde$ contains a $\d \times (\d-\r)$ submatrix satisfying the conditions of \identifiabilityThm, and that for every column $\oi$ in this submatrix, at least $\r$ columns in $\XO$ are observed at the nonzero locations of $\oi$.  Then for \ae\ $\sstar$, and almost surely with respect to $\nuu$, $\X$ can be uniquely recovered from $\XO$.
\end{myCorollary}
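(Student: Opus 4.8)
The plan is to use the equivalence between low-rank matrix completion and subspace identifiability: recovering $\X$ reduces to (i) identifying the subspace $\sstar$ exactly from $\XO$, and (ii) reconstructing each column of $\X$ once $\sstar$ is known. The bridge to \identifiabilityThm\ is that the observed data reveal not merely the coordinates of each sampling pattern but the entire restricted subspace $\sstaroi$ for the columns of the distinguished $\d \times (\d-\r)$ submatrix, so that the combinatorial hypothesis of \identifiabilityThm\ can be invoked to pin down $\sstar$ uniquely. Throughout I would first fix $\sstar$ outside a measure-zero set (so that all the genericity facts used in the proofs of \aEntriesLem\ and \liLem\ hold, and so that every coordinate projection of $\sstar$ onto any $\r$ canonical coordinates is a linear bijection), and then argue almost surely over the i.i.d.\ draws of the columns of $\X$.

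Consider a column $\oi$ of the distinguished submatrix. By \AoneAss\ it has exactly $\r+1$ nonzero entries, and by hypothesis at least $\r$ columns of $\XO$ are observed on precisely these coordinates. The coordinate projection $\sstar \to \sstaroi$ is a surjective linear map between two $\r$-dimensional spaces for generic $\sstar$, hence a bijection; since $\nuu$ is absolutely continuous with respect to Lebesgue measure on $\sstar$ by \AthreeAss, its pushforward onto $\sstaroi$ is absolutely continuous on that $\r$-dimensional space. Therefore $\r$ independent restricted observations are almost surely linearly independent and span $\sstaroi$. Taking a union bound over the at most $\d-\r$ columns of the submatrix (a finite collection of measure-zero exceptional events), almost surely we recover the full restriction $\sstaroi$ for every such column.

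With all these restrictions in hand and the submatrix satisfying the hypotheses of \identifiabilityThm, that theorem guarantees that $\sstar$ is the unique $\r$-dimensional subspace consistent with them; concretely it is recovered as $\ker \AA{}^\T$ with $\AA$ assembled from the recovered restrictions. Thus $\sstar$ itself is identified exactly for \ae\ $\sstar$. Finally, I would complete each column separately: by \AonepAss\ every column of $\XO$ is observed on at least $\r$ coordinates, and its true value lies in $\sstar$. For generic $\sstar$ the projection of $\sstar$ onto any set of $\r$ or more coordinates is injective (its kernel meets $\sstar$ trivially, an event failing only on a measure-zero set, and there are finitely many coordinate subsets to exclude), so the observed entries determine the unique point of $\sstar$ agreeing with them. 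This reconstructs every column and hence all of $\X$.

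The main obstacle is the spanning step: one must reconcile two distinct probabilistic statements, \ae\ $\sstar$ (needed both for identifiability and for injective completion) and almost surely with respect to $\nuu$ (needed so that the handful of observed columns at each pattern actually span $\sstaroi$ rather than falling into a proper subspace of it). The clean resolution is the order of quantifiers indicated above: fix $\sstar$ outside the relevant measure-zero set \emph{first}, which makes the pushforward of $\nuu$ onto each $\sstaroi$ absolutely continuous, and only \emph{then} invoke the almost-sure linear independence of finitely many independent draws.
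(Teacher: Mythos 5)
Your proposal is correct and follows essentially the same route as the paper's proof: $\r$ independent draws at each sampling pattern almost surely span $\sstaroi$, so any subspace consistent with those columns must match $\sstar$ on the distinguished submatrix, \identifiabilityThm\ then forces that subspace to equal $\sstar$, and finally each column is completed uniquely within $\sstar$ using \AonepAss. You are in fact somewhat more explicit than the paper about the order of quantifiers (fix $\sstar$ outside a null set first, then argue almost surely over $\nuu$) and about the injectivity of coordinate projections needed for the last column-completion step, which the paper compresses into a one-line ``observe that'' equivalence.
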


Proofs of these results are given in the \LRMCnecApx.  The intuition behind \LRMCnecCor\ is simply that identifying a subspace from its projections onto sets of canonical coordinates is {\em easier } than \LRMC, and so the necessary condition of  \identifiabilityThm \ is also necessary for \LRMC.  \LRMCsuffCor\ follows from the fact that $\sstar$ (or its projections) can be determined from $\r$ or more observations drawn from $\nuu$.


\subsection*{Validating \LRMC}
Under certain assumptions on the subset of observed entries (e.g., random sampling) and $\sstar$ (e.g., incoherence), existing methods, for example nuclear norm minimization \cite{candes}, succeed {\em with high probability} in completing the matrix exactly and thus identifying $\sstar$.  These assumptions are sufficient, but not necessary, and are sometimes unverifiable or unjustified in practice.
Therefore, the result of an \LRMC\ algorithm can be suspect. Simply finding a low-rank matrix that agrees with the observed data does not guarantee that it is the correct completion.  It is possible that there exist other $\r$-dimensional subspaces different from $\sstar$ that agree with the observed entries.

\begin{myExample}
Suppose we run an \LRMC\ algorithm on a matrix observed on the support of $\OO$, with $\OO$ and $\sstar$ as in \introEg \ in Section~\ref{modelSec}.  Suppose that the algorithm produces a completion with columns from   $\s={\rm span}[1 \ \ 2 \ \ 3 \ \ 5 \ \ 5]^\T$ instead of $\sstar$.  It is clear that the residual of the projection of any vector from $\sstaroi$ onto $\soi$ will be zero, despite the fact that $\s \neq \sstar$.
\end{myExample}

In other words, if the residuals are nonzero, we can discard an incorrect solution, but if the residuals are zero, we cannot validate whether our solution is correct or not.

\vspace{.3cm}
\begingroup
\leftskip1.5em
\rightskip\leftskip
\noindent
{\em \fitsCor, below, allows one to drop the sampling and incoherence assumptions, and validate the result of {\em any} \LRMC\ algorithm deterministically.}
\par
\endgroup
\vspace{.3cm}

Let \phantomsection\label{xDef}$\x_i$ denote the $\i^{th}$ column of $\X$, and \phantomsection\label{xoiDef}$\xoi$ be the restriction of $\x_i$ to the nonzero coordinates of $\oi$. We say that a subspace $\s$ \phantomsection\label{fitsDef}\fits\ $\XO$ if $\xoi \in \soi$ for every $\i$.

\begin{myCorollary}
\label{fitsCor}
Let \AthreeAss\ hold, and suppose $\XO$ contains two disjoint sets of columns, \phantomsection\label{XOoneDef}$\XOone$ and \phantomsection\label{XOtwoDef}$\XOtwo$, such that \phantomsection\label{OOtwoDef}$\OOtwo$ is a $\d \times (\d-\r)$ matrix satisfying the conditions of \identifiabilityThm.  Let $\s$ be the subspace spanned by the columns of a completion of  $\XOone$. Then for \ae\ $\sstar$, and almost surely with respect to $\nuu$, $\s$ \fits\ $\XOtwo$ if and only if $\s=\sstar$.
\end{myCorollary}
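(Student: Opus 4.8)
The plan is to prove the two directions separately, with essentially all of the work in the ``only if'' direction. The ``if'' direction is immediate: the columns of $\X$ lie in $\sstar$ by \AthreeAss, so the restriction $\xoi$ of any column to the support of $\oi$ lies in $\sstaroi$; hence if $\s=\sstar$ then $\soi=\sstaroi\ni\xoi$ for every $\oi$ in $\OOtwo$, and $\s$ \fits\ $\XOtwo$ by definition.

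For the ``only if'' direction, suppose $\s$ \fits\ $\XOtwo$; I want to conclude $\s=\sstar$ for \ae\ $\sstar$ and almost surely with respect to $\nuu$. The crucial structural observation is that $\s$, being spanned by a completion of $\XOone$, is determined by $\XOone$ alone, and therefore---since $\XOone$ and $\XOtwo$ are \emph{disjoint} sets of columns drawn independently under \AthreeAss---the subspace $\s$ is statistically independent of $\XOtwo$. This lets me condition on $\s$, treating it as a fixed $\r$-dimensional subspace while the columns of $\XOtwo$ remain freshly random. Via \identifiabilityThm\ the goal then reduces to showing that $\soi=\sstaroi$ for every column $\oi$ of $\OOtwo$: once this holds, $\s\in\SS(\sstar,\OOtwo)$, and since $\OOtwo$ satisfies the hypotheses of \identifiabilityThm, $\sstar$ is the only element of $\SS(\sstar,\OOtwo)$, forcing $\s=\sstar$.

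To establish $\soi=\sstaroi$ I argue, by a union bound over the finitely many columns, that the ``bad'' event $\{\xoi\in\soi \text{ and } \soi\neq\sstaroi\}$ has probability zero. Fixing $\s$ (by independence) and any $\oi$ with $\soi\neq\sstaroi$, note that both $\soi$ and $\sstaroi$ are subspaces of $\R^{\fix{\r}+1}$; moreover the restriction of an $\r$-dimensional subspace to $\r+1$ coordinates is a linear image of an $\r$-dimensional space, so $\dim\soi\leq\r=\dim\sstaroi$, and $\soi\neq\sstaroi$ forces $\soi\cap\sstaroi$ to be a \emph{proper} subspace of $\sstaroi$. Since $\xoi$ always lies in $\sstaroi$, the fit condition $\xoi\in\soi$ confines $\xoi$ to this proper subspace $\soi\cap\sstaroi$. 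For \ae\ $\sstar$ the coordinate projection $\sstar\to\sstaroi$ is a linear isomorphism (cf. \aEntriesLem), so the law of $\xoi$ is the pushforward of $\nuu$ and is absolutely continuous with respect to Lebesgue measure on $\sstaroi$; a proper subspace carries measure zero, so the bad event has probability zero. Summing over the $\d-\r$ columns, almost surely $\soi=\sstaroi$ for all $\oi$, completing the argument.

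The main obstacle I anticipate is making the dependence structure rigorous: $\s$ is the output of an arbitrary completion procedure, so I must justify that it is measurable with respect to $\XOone$ and hence independent of $\XOtwo$, which is precisely what decouples the (possibly adversarially chosen) candidate subspace from the fresh randomness used to test it. The remaining measure-theoretic ingredients---absolute continuity of the pushforward of $\nuu$, and the \ae-$\sstar$ genericity guaranteeing that each projection $\sstar\to\sstaroi$ is an isomorphism---are routine once this independence is in hand.
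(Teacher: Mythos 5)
Your proof is correct and takes essentially the same route as the paper's: a trivial ($\Leftarrow$) direction, and for ($\Rightarrow$) observing that fitting confines each $\xoi$ to $\soi \cap \sstaroi$, using absolute continuity of $\nuu$ to conclude $\soi = \sstaroi$ for every $\i$ almost surely, and then invoking \identifiabilityThm\ to force $\s = \sstar$. The only difference is that you spell out two steps the paper leaves implicit---the independence of $\s$ (a function of $\XOone$ alone) from the fresh columns of $\XOtwo$, and the pushforward/proper-subspace measure-zero argument---which strengthens rather than alters the argument.
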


The proof of \fitsCor\ is given in the \fitsApx.  In words, \fitsCor\ states that if one runs an \LRMC\ algorithm on $\XOone$, then the uniqueness and correctness of the resulting low-rank completion can be verified by testing whether it agrees with the {\em validation} set $\XOtwo$.

\begin{myExample}
Consider a $1000 \times 2000$ matrix $\XO$ with $\r=30$ and ideal incoherence.  In this case, the best sufficient conditions for \LRMC\ that we are aware of \cite{recht} require that all entries are observed. Simulations show that alternating minimization \cite{jain} can exactly complete such matrices when fewer than half of the entries are observed, and only using half of the columns.  While previous theory for matrix completion gives no guarantees in scenarios like this, our new results do.

To see this, split $\XO$ into two $1000 \times 1000$ submatrices $\XOone$ and $\XOtwo$. Use nuclear norm, alternating minimization, or any \LRMC\ method, to find a completion of $\XOone$.  \probabilityThm\ can be used to show that the sampling of $\XOtwo$ will satisfy the conditions of \identifiabilityThm\ \whp. even when only half the entries are observed randomly.  We can then use \fitsCor\ to show that if $\XOtwo$ is consistent with the completion of $\XOone$, then the completion is unique and correct.
\end{myExample}

\subsection*{Remarks}
Observe that the necessary and sufficient conditions in Corollaries \ref{LRMCnecCor} and \ref{LRMCsuffCor} and the validation in \fitsCor\ do not require the incoherence assumptions typically needed in \LRMC\ results in order to guarantee correctness and uniqueness.

Another advantage of results above is that they work for matrices of any rank, while standard \LRMC\ results only hold for ranks significantly smaller than the dimension $\d$.

Finally, the results above hold with probability $1$, as opposed to standard \LRMC\ statements, that hold \whp.  On the other hand, verifying whether $\OOtwo$ meets the conditions of \identifiabilityThm\ may be difficult.  Nevertheless, if the entries in our data matrix are sampled randomly with rates comparable to standard conditions in \LRMC, we know by \probabilityThm\ that \whp. $\OOtwo$ will satisfy such conditions.

\section{Graphical interpretation of the problem}
\label{graphSec}
The problem of \LRMC\ has also been studied from the graph theory perspective.  For example, it has been shown that graph connectivity is a necessary condition for completion \cite{kiraly}.  Being subspace identifiability so tightly related to \LRMC, it comes as no surprise that there also exist graph conditions for subspace identifiability.  In this section we draw some connections between subspace identifiability and graph theory that give insight on the conditions in \identifiabilityThm.  We use this interpretation to show that graph connectivity is a necessary yet insufficient condition for subspace identification.

Define \phantomsection\label{GODef}$\GOO$ as the bipartite graph with disjoint sets of {\em row} and {\em column} vertices, where there is an edge between row vertex $\j$ and column vertex $\i$ if the $(\j,\i)^{th}$ entry of $\OO$ is nonzero.

\begin{myExample}
\label{graphEg}
With $\d=5$, $\r=1$ and
\begin{center}
	\begin{tikzpicture}
		\node [title] (c) at (-3,-1) {
			$\OO = \left[\begin{matrix}
			1 & 0 & 0 & 0 \\
			1 & 1 & 0 & 0 \\
			0 & 1 & 1 & 0 \\
			0 & 0 & 1 & 1 \\
			0 & 0 & 0 & 1 \\
                    \end{matrix}\right]$};
                    
                 \node [title] (c) at (-1,-1) {$\Rightarrow$};
		
		\node [title] (c) at (1,-2.25) {$\GOO$};
		\node [title] (c) at (2,.5) {Columns};
		\node [title] (r) at (0,.5) {Rows};
		\node [label] (c1) at (2,-0.25){$1$}; \node [vertex] (c1) at (2,-0.25){};
		\node [label] (c2) at (2,-.75) {$2$}; \node [vertex] (c2) at (2,-.75) {};
		\node [label] (c3) at (2,-1.25) {$3$}; \node [vertex] (c3) at (2,-1.25) {};
		\node [label] (c4) at (2,-1.75) {$4$}; \node [vertex] (c4) at (2,-1.75) {};
		
		\node [label] (r1) at (0,0) {$1$}; \node [vertex] (r1) at (0,0) {};
		\node [label] (r2) at (0,-.5) {$2$}; \node [vertex] (r2) at (0,-.5) {};
		\node [label] (r3) at (0,-1) {$3$}; \node [vertex] (r3) at (0,-1) {};
		\node [label] (r4) at (0,-1.5) {$4$}; \node [vertex] (r4) at (0,-1.5) {};
		\node [label] (r5) at (0,-2) {$5$}; \node [vertex] (r5) at (0,-2) {};
		\draw [font=\scriptstyle]
				(c1) edge (r1)
				(c1) edge (r2)
				(c2) edge (r2)
				(c2) edge (r3)
				(c3) edge (r3)
				(c3) edge (r4)
				(c4) edge (r4)
				(c4) edge (r5);
	\end{tikzpicture}.
\end{center}
\end{myExample}

Recall that the \phantomsection\label{neighborhoodDef}{\em \neighborhood} of a set of vertices is the collection of all their adjacent vertices.

\vspace{.3cm}
\begingroup
\leftskip1.5em
\rightskip\leftskip
\noindent
{\em The graph theoretic interpretation of the condition on $\OO$ in \identifiabilityThm\ is that every set of $\nOf$ column vertices in $\GOO$ must have a \neighborhood\ of at least $\nOf+\r$ row vertices.}
\par
\endgroup
\vspace{.3cm}

\begin{myExample}
One may verify that every set of $\nOf$ column vertices in $\GOO$ from \graphEg\ has a \neighborhood\ of at least $\nOf+\r$ row vertices.
\pagebreak
On the other hand, if we consider $\OO$ as in \introEg, the \neighborhood\ of the column vertices $\{1,2,3\}$ in $\GOO$ contains fewer than $\nOf+\r$ row vertices:
\begin{center}
	\begin{tikzpicture}
		\node [title] (c) at (1,-2.25) {$\GOO$};
		\node [title] (c) at (2,.5) {Columns};
		\node [title] (r) at (0,.5) {Rows};
		\node [label] (c1) at (2,-0.25){$1$}; \node [vertex, line width=1.25pt] (c1) at (2,-0.25){};
		\node [label] (c2) at (2,-.75) {$2$}; \node [vertex, line width=1.25pt] (c2) at (2,-.75) {};
		\node [label] (c3) at (2,-1.25) {$3$}; \node [vertex, line width=1.25pt] (c3) at (2,-1.25) {};
		\node [label] (c4) at (2,-1.75) {$4$}; \node [vertex] (c4) at (2,-1.75) {};
		
		\node [label] (r1) at (0,0) {$1$}; \node [vertex, line width=1.25pt] (r1) at (0,0) {};
		\node [label] (r2) at (0,-.5) {$2$}; \node [vertex, line width=1.25pt] (r2) at (0,-.5) {};
		\node [label] (r3) at (0,-1) {$3$}; \node [vertex, line width=1.25pt] (r3) at (0,-1) {};
		\node [label] (r4) at (0,-1.5) {$4$}; \node [vertex] (r4) at (0,-1.5) {};
		\node [label] (r5) at (0,-2) {$5$}; \node [vertex] (r5) at (0,-2) {};
		\draw [font=\scriptstyle]
				(c1) edge[line width=.75pt] (r1)
				(c1) edge[line width=.75pt] (r2)
				(c2) edge[line width=.75pt] (r2)
				(c2) edge[line width=.75pt] (r3)
				(c3) edge[line width=.75pt] (r1)
				(c3) edge[line width=.75pt] (r3)
				(c4) edge (r4)
				(c4) edge (r5);
	\end{tikzpicture}.
\end{center}
\end{myExample}

With this interpretation of \identifiabilityThm, we can extend terms and results from graph theory to our context.  One example is the next corollary, which states that $\r$-row-connectivity is a necessary but insufficient condition for subspace identifiability.

We say $\GOO$ is {\em $\r$-row-connected} if $\GOO$ remains a connected graph after removing any set of $\r-1$ row vertices and all their adjacent edges.

\begin{myCorollary}
\label{rConnectedCor}
For \ae\ $\sstar$, $|\SS(\sstar,\OO)|>1$ if $\GOO$ is not $\r$-row-connected.  The converse is only true for $\r=1$.
\end{myCorollary}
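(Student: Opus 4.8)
The plan is to recast the condition of \identifiabilityThm\ in the neighborhood language already introduced (``every set of $\nOf$ columns touches at least $\nOf+\r$ rows'') and then handle the two implications of the corollary separately.

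\textbf{Forward implication (not $\r$-row-connected $\Rightarrow$ $|\SS(\sstar,\OO)|>1$).} Suppose there is a set $R$ of $s\leq\r-1$ row vertices whose deletion disconnects $\GOO$. I would pass to the reduced graph, let $\{H_j\}$ be its connected components, and write $n_j,m_j$ for the numbers of column and row vertices in $H_j$. Since the reduced graph has $\d-s$ rows and still $\d-\r$ columns, counting gives $\sum_j(m_j-n_j)=\r-s\geq 1$. Because each column has degree $\r+1\geq 2$, deleting only $s\leq\r-1$ rows cannot isolate a column, so the only columnless components are single isolated rows, each contributing $m_j-n_j=+1$. A short averaging argument then yields a component with a column and $m_j-n_j\leq \r-s-1$: if every component with $n_j\geq 1$ had $m_j-n_j\geq \r-s$, the totals would force exactly one component, contradicting disconnectedness. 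Letting $\O$ be the columns of that component, every nonzero row of $\O$ lies in $H_j$ or in $R$, so $\mOf(\O)\leq m_j+s\leq n_j+\r-1<\nOf(\O)+\r$, which violates \eqref{identifiabilityEq}; \identifiabilityThm\ then gives $|\SS(\sstar,\OO)|>1$.

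\textbf{Converse for $\r=1$.} Here $\r$-row-connectivity just means $\GOO$ is connected, and each column has exactly two nonzero entries, so the $\d-1$ columns are the edges of a graph $G'$ on the $\d$ row vertices, with connectivity of $\GOO$ equivalent to connectivity of $G'$. A connected graph on $\d$ vertices with $\d-1$ edges is a spanning tree, hence acyclic, so every subset $\O$ of columns is a forest spanning $\mOf(\O)=\nOf(\O)+c$ rows with $c\geq 1$ components; thus $\mOf(\O)\geq\nOf(\O)+1=\nOf(\O)+\r$ and \identifiabilityThm\ yields $|\SS(\sstar,\OO)|=1$. Together with the forward implication this gives the equivalence at $\r=1$.

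\textbf{Converse fails for $\r\geq 2$.} It suffices to exhibit, for each $\r\geq 2$, a pattern that is $\r$-row-connected yet contains a deficient subset. For $\r=2$, $\d=7$ one such pattern is
\begin{align*}
\OO \ = \ \left[\begin{matrix}
1&1&1&1&0\\
1&1&0&0&1\\
1&0&1&0&0\\
0&1&1&0&0\\
0&0&0&1&0\\
0&0&0&1&1\\
0&0&0&0&1
\end{matrix}\right],
\end{align*}
whose first three columns touch only rows $1$--$4$, so $\mOf=4<3+2=\nOf+\r$, while a direct check shows that deleting any single row leaves $\GOO$ connected. For general $\r$ I would embed a small ``deficient but robust'' block---$\r+2$ columns on $\r+2$ rows, each column omitting a distinct row, which has deficiency $0$ and stays connected after deleting any $\r-1$ of its rows---into an otherwise redundantly sampled pattern whose remaining columns cover the other rows and attach the block at enough distinct rows to survive every deletion of $\r-1$ rows.

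\textbf{Main obstacle.} The forward implication and the $\r=1$ case are essentially bookkeeping with \identifiabilityThm. The real difficulty is the last step: producing, for every $\r\geq 2$, a pattern that is simultaneously $\r$-row-connected---a strong global requirement, robust to \emph{every} deletion of $\r-1$ rows---and locally deficient. The tension is that the tight column budget $\N=\d-\r$ forces the ``efficient'' patterns (hub-plus-identity or banded designs) to have deficiency exactly $\r$ everywhere, so the counterexample must spend its redundancy non-uniformly while still re-routing connectivity around every choice of $\r-1$ deleted rows; verifying that the proposed block-plus-connector construction achieves this for all $\r$ is the crux of the argument.
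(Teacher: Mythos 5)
Your proposal is correct and reaches the same three conclusions as the paper, but by noticeably different routes in two places. For the forward implication, the paper fixes the disconnecting set $\J$ of $\r-1$ rows, partitions the remaining rows into $\Jp,\Jdp$ and the columns into $\Op,\Odp$, assumes $|\SS(\sstar,\OO)|=1$, and applies \eqref{identifiabilityEq} to both blocks to reach the contradiction $0\geq 1$; you instead run a component-by-component count on the reduced graph and exhibit a violating column subset outright ($\mOf(\O)\leq m_j+s\leq\nOf(\O)+\r-1$), which is a direct argument rather than a contradiction, handles isolated rows and more than two components explicitly, and works for any disconnecting set of size $s\leq\r-1$. For the converse at $\r=1$, the paper argues by contrapositive with a two-case analysis ($\mp+\mdp<\d$ or $\mp+\mdp=\d$) showing that a deficient subset forces disconnection; your route---a connected $\GOO$ with $\d-1$ two-entry columns is a spanning tree on the rows, so every column subset is a forest and satisfies $\mOf(\O)=\nOf(\O)+c\geq\nOf(\O)+\r$---is cleaner and gives the equivalence in one stroke. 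Your $7\times 5$ counterexample for $\r=2$ checks out (every column has $\r+1=3$ entries, the first three columns touch only four rows so \eqref{identifiabilityEq} fails, and removing any single row leaves $\GOO$ connected) and is smaller than the paper's $10\times 8$ example. The only incomplete piece is the generalization to $\r>2$, which you leave as a block-plus-connector sketch whose attachment pattern is not pinned down; note, however, that the paper is no more explicit (``this example can be easily generalized for $\r>2$''), so this does not put you behind the paper's own proof, though completing that construction would be required for a fully self-contained argument.
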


\rConnectedCor\ is proved in the \rConnectedApx.

\section{Conclusions}
\label{conclusionsSec}
In this paper we determined when and only when can one identify a subspace from its projections onto subsets of the canonical coordinates.  We show that the conditions for identifiability hold \whp. under standard random sampling schemes, and that when these conditions are met, identifying the subspace becomes a trivial task.

This gives new necessary and sufficient conditions for \LRMC, and allows one to verify whether the result of {\em any} \LRMC\ algorithm is unique and correct without prior incoherence or sampling assumptions.


\newpage
\section*{Appendix}
\label{appendix}

\subsection*{Generalization of Our Results}
\label{generalizationApx}
Since the restriction of $\sstar$ onto $\L \leq \r$ coordinates will be $\R^{\fix{\L}}$ (in general), such a projection will provide no information specific to $\sstar$.  We will thus assume without loss of generality that:
\begin{itemize}
\phantomsection\label{AonedpAssDef}
\item[\AonedpAss]
$\O$ has {\em at least} $\r+1$ nonzero entries per column.
\end{itemize}

Under \AonedpAss, a column with $\L$ observed entries restricts $\SS(\sstar,\OO)$ just as $\L-\r$ columns under \AoneAss.  Thus in general, if there are columns in $\OO$ with more than $\r+1$ nonzero entries, we can {\em split} them to obtain an {\em expanded} matrix $\OSplit$ (defined below), with exactly $\r+1$ nonzero entries per column, and use \identifiabilityThm\ directly on this expanded matrix.

More precisely, let \phantomsection\label{kiDef}$\ki{1},\dots,\ki{\ell_i}$ denote the indices of the \phantomsection\label{LiDef}$\Li$ nonzero entries in the $\i^{th}$ column of $\OO$.  Define \phantomsection\label{OiDef}$\Oi{i}$ as the $\d \times (\Li-\r)$ matrix, whose \phantomsection\label{jDef}$\j^{th}$ column has the value $1$ in rows $\ki{1},\dots,\ki{r}$, $\ki{r+j}$, and zeros elsewhere.  For example, if $\ki{1}=1,\dots,\ki{\ell_i}=\Li$, then
\begin{align*}
\Oi{i} \ = \ 
\left[ \begin{matrix} \\ \\ \\ \\ \\ \\ \\ \end{matrix} \right.
\underbrace{
\begin{matrix}
\multirow{2}{*}{\hspace{.3cm}$\Scale[1.5]{\one}$\hspace{.35cm}} \\ \\ \hline
\multirow{3}{*}{$\Scale[1.5]{\I}$} \\ \\ \\ \hline
\multirow{2}{*}{$\Scale[1.5]{\bs{0}}$} \\ \\
\end{matrix}}_{\fix{\Li}-\fix{\r}}
\left] \begin{matrix}
\left. \begin{matrix} \\ \\ \end{matrix} \right\} \r \hspace{.7cm} \\
\left. \begin{matrix} \\ \\ \\ \end{matrix} \right\} \Li-\r \hspace{.1cm} \\
\left. \begin{matrix} \\ \\ \end{matrix} \right\} \d-\Li,
\end{matrix} \right.
\end{align*}
where $\one$ denotes a block of all $1$'s and $\I$ the identity matrix.  Finally, define \phantomsection\label{OSplitDef}$\OSplit \ := \ [ \Oi{1} \ \cdots \ \Oi{N} ]$.

The following is a generalization of \identifiabilityThm\ to an arbitrarily number of projections and an arbitrary number of canonical coordinates involved in each projection.  It states that $\sstar$ will be the only subspace in $\SS(\sstar,\OO)$ if and only if there is a matrix \phantomsection\label{OHatDef}$\OHat$, formed with $\d-\r$ columns of $\OSplit$, that satisfies the conditions of \identifiabilityThm.

\begin{framed}
\begin{myTheorem}
Let \AonedpAss\ hold.  For \hyperref[aeDef]{almost every} $\sstar$, $\sstar$ is the only subspace in $\SS(\sstar,\OO)$ if and only if there is a matrix $\OHat$, formed with $\d-\r$ columns of $\OSplit$, such that every matrix $\O$ formed with a subset of the columns in $\OHat$ satisfies \eqref{identifiabilityEq}.
\end{myTheorem}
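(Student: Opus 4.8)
The plan is to reduce this statement to \identifiabilityThm\ by showing that replacing a column $\oi$ of $\OO$ with its block $\Oi{i}$ of split columns leaves the set of consistent subspaces unchanged, and then extracting a suitable $\d-\r$ columns from the expanded matrix $\OSplit$. The key claim, proved one column at a time, is that for \ae\ $\sstar$ a subspace $\s$ satisfies $\soi=\sstaroi$ if and only if it matches $\sstar$ on each of the $\Li-\r$ columns of $\Oi{i}$; granting this for every $i$ gives
$$\SS(\sstar,\OO)=\SS(\sstar,\OSplit).$$
The forward direction is immediate, since every column of $\Oi{i}$ selects a subset of the coordinates active in $\oi$. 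For the converse I would, for each $j=1,\dots,\Li-\r$, let $\bs{a}^{(j)}\in\R^{\fix{\r}+1}$ span the (generically one-dimensional) kernel of the restriction of $\sstar$ to the $\r+1$ coordinates $\ki{1},\dots,\ki{r},\ki{r+j}$ selected by the $j^{th}$ column of $\Oi{i}$.

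By \aEntriesLem, all entries of $\bs{a}^{(j)}$ are nonzero for \ae\ $\sstar$. Padding $\bs{a}^{(j)}$ with zeros produces a vector $\tilde{\bs{a}}^{(j)}\in\R^{\fix{\Li}}$, supported on the nonzero coordinates of $\oi$, that lies in $\ker\sstaroi$ and whose only nonzero entry among the coordinates $\ki{r+1},\dots,\ki{\Li}$ sits in position $\ki{r+j}$. Hence the $\tilde{\bs{a}}^{(j)}$ are linearly independent, and since $\dim\ker\sstaroi=\Li-\r$ for \ae\ $\sstar$, they form a basis of $\ker\sstaroi$. Now matching $\s$ to $\sstar$ on the $j^{th}$ split column forces $\tilde{\bs{a}}^{(j)}$ to be orthogonal to $\soi$, so matching on all of them forces $\soi\subseteq(\ker\sstaroi)^\perp=\sstaroi$. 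Moreover matching on even a single split column makes the corresponding $(\r+1)$-coordinate restriction of $\s$ exactly $\r$-dimensional, which forces $\dim\soi=\r$; the inclusion $\soi\subseteq\sstaroi$ then upgrades to equality. Taking the union of the finitely many exceptional measure-zero sets over all columns and split columns yields the equivalence for \ae\ $\sstar$.

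With $\SS(\sstar,\OO)=\SS(\sstar,\OSplit)$ established, and since $\OSplit$ satisfies \AoneAss\ by construction, the argument of \proofSec\ applies verbatim to $\OSplit$: letting $\AA$ collect the kernel vectors $\ai$ attached to the columns of $\OSplit$, every $\s\in\SS(\sstar,\OSplit)$ satisfies $\s\subseteq\ker\AA{}^\T$, and $\sstar$ is the unique consistent subspace if and only if $\liOf(\AA)=\d-\r$. By \liLem\ one has $\liOf(\AA)\le\mOf(\AA)-\r\le\d-\r$ always, so uniqueness is equivalent to the existence of $\d-\r$ linearly independent columns of $\AA$. Through the nonzero-pattern correspondence of \aEntriesLem, these pick out $\d-\r$ columns of $\OSplit$ that assemble into a $\d\times(\d-\r)$ matrix $\OHat$ obeying \AoneAss\ and \AtwoAss. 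Applying \identifiabilityThm\ to $\OHat$ converts the independence of the associated columns of $\AA$ into the condition that every submatrix $\O$ of $\OHat$ satisfy \eqref{identifiabilityEq}. Conversely, any $\OHat$ satisfying \eqref{identifiabilityEq} already determines $\sstar$ uniquely from its own columns by \identifiabilityThm, and because $\SS(\sstar,\OSplit)\subseteq\SS(\sstar,\OHat)$, the subspace $\sstar$ remains the unique element of $\SS(\sstar,\OSplit)=\SS(\sstar,\OO)$. Chaining these equivalences proves the theorem.

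I expect the splitting equivalence of the first two paragraphs to be the main obstacle. Within it, the delicate points of the converse are verifying that the padded kernel vectors $\tilde{\bs{a}}^{(j)}$ actually span $\ker\sstaroi$ (which is exactly where \aEntriesLem\ is needed, to guarantee the isolating nonzero entry in position $\ki{r+j}$) and checking that matching on the split columns forces $\soi$ to be a full $\r$-dimensional subspace, so that the inclusion $\soi\subseteq\sstaroi$ becomes an equality rather than a strict containment. The subsequent reduction to \identifiabilityThm\ is largely bookkeeping; its only genuine subtlety is that $\OSplit$ need not have exactly $\d-\r$ columns, which is why one selects a maximal linearly independent subfamily to form $\OHat$ instead of invoking \identifiabilityThm\ on $\OSplit$ directly.
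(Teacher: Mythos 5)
Your proof is correct, and its overall skeleton matches the paper's: both reduce the theorem to the identity $\SS(\sstar,\OO)=\SS(\sstar,\OSplit)$ and then invoke \identifiabilityThm. The difference is in how the hard inclusion $\SS(\sstar,\OSplit)\subseteq\SS(\sstar,\OO)$ is established. The paper's proof is a one-line recursion: it observes that $\Oi{i}$, restricted to the nonzero rows of $\oi$, is itself a sampling matrix satisfying the hypotheses of \identifiabilityThm\ in $\R^{\fix{\Li}}$, so the theorem applied locally to $\sstaroi$ forces $\soi=\sstaroi$. You instead unroll that recursion by hand: you construct the padded kernel vectors $\tilde{\bs{a}}^{(j)}$, use \aEntriesLem\ to get the isolating nonzero entries that make them linearly independent, conclude they span the orthogonal complement of $\sstaroi$, and finish with the dimension count $\dim\soi=\r$. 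This is more work but also more self-contained, and it makes explicit two points the paper glosses over: that $\soi$ really is $\r$-dimensional (so that the inclusion $\soi\subseteq\sstaroi$ upgrades to equality), and that the a.e.\ qualifiers can be collected over finitely many exceptional sets rather than appealing to a.e.\ statements about the restricted Grassmannian $\Gr(\r,\R^{\fix{\Li}})$. Your second half --- extracting $\OHat$ from a maximal linearly independent set of columns of $\AA$ via \liLem\ and \independenceLem, and noting $\SS(\sstar,\OSplit)\subseteq\SS(\sstar,\OHat)$ for the converse --- is also a genuine addition: the paper's proof stops at ``it suffices to show $\SS(\sstar,\OO)=\SS(\sstar,\OSplit)$'' and leaves exactly this bookkeeping implicit, including the fact that non-uniqueness forces $\liOf(\AA)<\d-\r$. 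In short: the paper's route is shorter because it reuses \identifiabilityThm\ twice (once locally, once globally); yours re-proves the local instance directly and documents the reduction, at the cost of length but with a gain in completeness.
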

\end{framed}

\begin{proof}
It suffices to show that $\SS(\sstar,\OO) = \SS(\sstar,\OSplit)$.  Let \phantomsection\label{oijDef}$\oij{j}$ denote the $\j^{th}$ column of $\Oi{i}$.
\begin{itemize}
\item[($\subset$)]
Let $\s \in \SS(\sstar,\OO)$.  By definition, $\soi=\sstaroi$, which trivially implies $\{\s_{\fix{\oij{j}}}=S^\star_{\fix{\oij{j}}}\}_{\fix{\j}=1}^{\fix{\Li}-\fix{\r}}$.  Since this is true for every $\i$, we conclude $\s \in \SS(\sstar,\OSplit)$.

\item[($\supset$)]
Let $\s \in \SS(\sstar,\OSplit)$.  By definition, $\{\s_{\fix{\oij{j}}}=S^\star_{\fix{\oij{j}}}\}_{\fix{\j}=1}^{\fix{\Li}-\fix{\r}}$.  Notice that $\Oi{i}$ satisfies the conditions of \identifiabilityThm\ restricted to the nonzero rows in $\oi$, which implies $\soi=\sstaroi$.  Since this is true for every $\i$, we conclude $\s \in \SS(\sstar,\OO)$.
\end{itemize}
\vspace{-.5cm}
\end{proof}

\subsection*{Proof of \probabilityThm}
\label{probabilityApx}
Let \phantomsection\label{EDef}$\E$ be the event that $\OO$ fails to satisfy the conditions of \identifiabilityThm.
It is easy to see that this may only occur if there is a matrix formed with $\nOf$ columns from $\OO$ that has all its nonzero entries in the same $\nOf+\r-1$ rows.  Let \phantomsection\label{EnDef}$\En$ denote the event that the matrix formed with the first $\nOf$ columns from $\OO$ has all its nonzero entries in the first $\nOf+\r-1$ rows.  Then
\begin{align}
\label{badSetProbEq}
\P\left( \E \right) 
&\leq \sum_{\fix{\nOf}=1}^{\fix{\d}-\fix{\r}} {\d-\r \choose \nOf} {\d \choose \nOf+\r-1} \P\left( \En \right)
\end{align}
If each column of $\OO$ contains at least $\L$ nonzero entries, distributed uniformly and independently at random with $\L$ as in \eqref{kEq}, it is easy to see that $\P(\En)=0$ for $\nOf \leq \L-\r$, and for $\L-\r < \nOf \leq \d-\r$,
\begin{align*}
\P(\En) \leq \left(\frac{{\fix{\nOf}+\fix{\r}-1 \choose \fix{\L}{}}} {{\fix{\d}{} \choose \fix{\L}{}}}\right)^{\fix{\nOf}}
&< \left(\frac{\nOf+\r-1}{\d}\right)^{\fix{\L}\fix{\nOf}}.
\end{align*}
Since ${\fix{\d}-\fix{\r} \choose \fix{\nOf}{}} < {\fix{\d}{} \choose \fix{\nOf}+\fix{\r}-1}$,
continuing with \eqref{badSetProbEq} we obtain:
\begin{align}
\P \left( \E \right) &< \sum_{\fix{\nOf}=\fix{\L}-\fix{\r}+1}^{\fix{\d}-\fix{\r}} {\d \choose \nOf+\r-1}^2 \left(\frac{\nOf+\r-1}{\d}\right)^{\fix{\L}\fix{\nOf}} \nonumber \\
&< \sum_{\fix{\nOf}=\fix{\L}}^{\frac{\fix{\d}{}}{2}} {\d \choose \nOf}^2 \left(\frac{\nOf}{\d}\right)^{\fix{\L}(\fix{\nOf}-\fix{\r}+1)} \label{firstPartProbEq} \\
&+ \sum_{\fix{\nOf}=1}^{\frac{\fix{\d}{}}{2}} {\d \choose \d-\nOf}^2 \left(\frac{\d-\nOf}{\d}\right)^{\fix{\L}(\fix{\d}-\fix{\nOf}-\fix{\r}+1)}. \label{secondPartProbEq}
\end{align}
For the terms in \eqref{firstPartProbEq}, write
\begin{align}
\label{firstTermEq}
{\d \choose \nOf}^2 \left(\frac{\nOf}{\d}\right)^{\fix{\L}(\fix{\nOf}-\fix{\r}+1)} &\leq \left({\frac{\d e}{\nOf}}\right)^{2\fix{\nOf}} \left(\frac{\nOf}{\d}\right)^{\fix{\L}(\fix{\nOf}-\fix{\r}+1)}.
\end{align}
Since $\n \geq \L \geq 2\r$,
\begin{align}
\label{someEq}
\eqref{firstTermEq} &< \left({\frac{\d e}{\nOf}}\right)^{2\fix{\nOf}} \left(\frac{\nOf}{\d}\right)^{\fix{\L} \frac{\fix{\nOf}{}}{2}} 
= e^{2\fix{\nOf}} \left(\frac{\nOf}{\d}\right)^{(\frac{\fix{\L}{}}{2}-2)\fix{\nOf}},
\end{align}
and since $\nOf \leq \frac{\fix{\d}{}}{2}$,
\begin{align}
\label{firstBoundEq}
\eqref{someEq}
&\leq e^{2\fix{\nOf}} \left(\frac{1}{2}\right)^{(\frac{\fix{\L}{}}{2}-2)\fix{\nOf}} 
= \left( e^2 \cdot 2^{-\frac{\fix{\L}{}}{2}+2} \right)^{\fix{\nOf}} 
< \frac{\eps}{\d},
\end{align}
where the last step follows because $\L > 2\log_2(\frac{\fix{\d} e^2}{\fix{\eps}{}})+4$.

For the terms in \eqref{secondPartProbEq}, write
\begin{align}
\small
\label{secondTermEq}
{\d \choose \d-\nOf}^2 \left(\frac{\d-\nOf}{\d}\right)^{\fix{\L}(\fix{\d}-\fix{\nOf}-\fix{\r}+1)} &\leq \left(\frac{\d e}{\nOf}\right)^{2\fix{\nOf}} \left(\frac{\d-\nOf}{\d}\right)^{\fix{\L}(\fix{\d}-\fix{\nOf}-\fix{\r}+1)}.
\normalsize
\end{align}
In this case, since $1 \leq \nOf \leq \frac{\fix{\d}{}}{2}$ and $\r \leq \frac{\fix{\d}{}}{6}$, we have
\begin{align*}
\eqref{secondTermEq}
< (\d e)^{2\fix{\nOf}} \left(\frac{\d-\nOf}{\d}\right)^{\fix{\L} \frac{\fix{\d}{}}{3}} 
&= (\d e)^{2\fix{\nOf}} \left[\left(1-\frac{\nOf}{\d}\right)^{\fix{\d}} \right]^{\frac{\fix{\L}{}}{3}} \\
&\leq (\d e)^{2\fix{\nOf}} \left[e^{-\fix{\nOf}}\right]^{\frac{\fix{\L}{}}{3}},
\end{align*}
which we may rewrite as
\begin{align}
\label{secondBoundEq}
\left(e^{2\log\fix{\d}}\right)^{\fix{\n}} \left(e^2\right)^{\fix{\n}} \left(e^{-\frac{\fix{\L}{}}{3}}\right)^{\fix{\n}} = \left(e^{2\log\fix{\d} + 2 - \frac{\fix{\L}{}}{3} }\right)^{\fix{\n}} < \frac{\eps}{\d},
\end{align}
where the last step follows because $\L > 3 \log(\frac{\fix{\d}{}}{\fix{\eps}{}}) + 6 \log \d + 6$.
Substituting \eqref{firstBoundEq} and \eqref{secondBoundEq} in \eqref{firstPartProbEq} and \eqref{secondPartProbEq}, we have that $\P(\E)<\eps$, as desired. \hfill $\square$



\subsection*{Proof of \LRMCnecCor}
\label{LRMCnecApx}
A subspace satisfying $\soi=\sstaroi$ will \fit\ all the columns of $\XO$ observed in the nonzero positions of $\oi$.  Therefore, any subspace that satisfies $\soi=\sstaroi$ for every $\oi$ in $\OTilde$ will \fit\ all the columns in $\XO$.  If $\OTilde$ does not satisfy the conditions of \identifiabilityThm, there will exist multiple subspaces that \fit\ $\XO$, whence $\X$ cannot be uniquely recovered from $\XO$. \hfill $\square$

\subsection*{Proof of \LRMCsuffCor}
\label{LRMCsuffApx}
Suppose there are at least $\r$ columns in $\XO$ observed in the nonzero positions of $\oi$.  Then almost surely with respect to $\nuu$, the restrictions of such columns form a basis for $\sstaroi$.  Therefore, any subspace $\s$ that \fits\ such columns must satisfy $\soi = \sstaroi$.  If this is true for every $\oi$ in a $\d \times (\d-\r)$ submatrix of $\OTilde$, then any subspace that \fits\ $\XO$ must satisfy $\soi=\sstaroi$ for every $\oi$ in this submatrix.

There will be only one subspace that satisfies this condition if this submatrix satisfies the conditions in \identifiabilityThm.  Finally, observe that under \AonepAss, the condition that $\X$ can be uniquely recovered from $\XO$ is equivalent to saying that $\sstar$ is the only $\r$-dimensional subspace that \fits\ $\XO$. \hfill $\square$

\subsection*{Proof of \fitsCor}
\label{fitsApx}
\begin{itemize}
\item[($\Leftarrow$)]
$\xoi \in \sstaroi$ by assumption, so if $\s=\sstar$, it is trivially true that $\xoi \in \soi$.

\item[($\Rightarrow$)]
Use $\i=1,\dots,(\d-\r)$ to index the columns in $\XOtwo$.   Since $\s$ \fits\ $\XOtwo$, by definition $\xoi \in \soi$.  On the other hand, $\xoi \in \sstaroi$ by assumption, which implies that for every $\i$, $\xoi$ lies in the intersection of $\soi$ and $\sstaroi$.  Recall that $\x_i$ is sampled independently according to $\nuu$, an absolutely continuous distribution with respect to the Lebesgue measure on $\sstar$.  Since $\dim \soi \leq \r$, and for \ae\ $\sstar$, $\dim \sstaroi=\r$, the event
\begin{align*}
\bigcap_{\fix{\i}=1}^{\fix{\d}-\fix{\r}} \left\{\xoi \in \soi \cap \sstaroi \right\}
\end{align*}
will (almost surely with respect to $\nuu$) only happen if $\soi = \sstaroi$ $\forall$ $\i$, that is, if $\s \in \SS(\sstar,\OOtwo)$.  Since $\OOtwo$ satisfies the conditions of \identifiabilityThm, $\sstar$ is the only subspace in $\SS(\sstar,\OOtwo)$.  This implies $\s=\sstar$, which concludes the proof. \hfill $\square$
\end{itemize}

\subsection*{Proof of \rConnectedCor}
\label{rConnectedApx}
\begin{itemize}
\item[($\Leftarrow$)]
Suppose $\GOO$ is not $\r$-row-connected.  This means there exists a set \phantomsection\label{JDef}$\J$ of $\r-1$ row vertices such that if removed with their respective edges, $\GOO$ becomes a disconnected graph.

Let \phantomsection\label{JpDef}$\Jp$, \phantomsection\label{JdpDef}$\Jdp$ and $\J$ be a partition of the row vertices in $\GOO$ such that $\Jp$ and $\Jdp$ become disconnected when $\J$ is removed.

Similarly, let \phantomsection\label{OpDef}$\Op$ and \phantomsection\label{OdpDef}$\Odp$ be a partition of the columns in $\OO$ such that the column vertices corresponding to $\Op$ are disconnected from the row vertices in $\Jdp$, and the column vertices corresponding to $\Odp$ are disconnected from the row vertices in $\Jp$.

\begin{center}
	\begin{tikzpicture}
		\node [title] (c) at (1,-3.5) {$\GOO$};
		\node [title] (c) at (2,.5) {Columns};
		\node [title] (r) at (0,.5) {Rows};
		
		\node [vertex] (c1) at (2,-.5) {};
		\node [label] (cdots1) at (2,-0.875){$\vdots$}; \node [label] (cdots1) at (2,-0.875) {};
		\node [vertex] (c2) at (2,-1.25) {};
		\node [title](CwtO) at (2.58,-.8725){$\left\} \begin{matrix} \\ \\ \\ \end{matrix} \Op \right.$};
		
		\node [vertex] (c3) at (2,-2) {};
		\node [label] (cdots2) at (2,-2.375){$\vdots$}; \node [label] (cdots2) at (2,-2.375) {};
		\node [vertex] (c4) at (2,-2.75) {};
		\node [title](CwtO) at (2.58,-2.3725){$\left\} \begin{matrix} \\ \\ \\ \end{matrix} \Odp \right.$};
		
		\node [vertex] (r1) at (0,0) {};
		\node [label] (rdots1) at (0,-0.375){$\vdots$}; \node [label] (rdots1) at (0,-0.35){};
		\node [vertex] (r2) at (0,-.75) {};
		\node [title](CwtO) at (-.55,-0.375){$\left. \begin{matrix} \\ \\ \\ \end{matrix} \Jp \right\{$};
		
		\node [vertex] (r3) at (0,-1.25) {};
		\node [label] (rdots2) at (0,-1.625){$\vdots$}; \node [label] (rdots2) at (0,-1.65){};
		\node [vertex] (r4) at (0,-2) {};
		\node [title](CwtO) at (-.5,-1.625){$\left. \begin{matrix} \\ \\ \\ \end{matrix} \J \right\{$};
		
		\node [vertex] (r5) at (0,-2.5) {};
		\node [label] (rdots3) at (0,-2.875){$\vdots$}; \node [label] (rdots3) at (0,-2.875){};
		\node [vertex] (r6) at (0,-3.25) {};
		\node [title](CwtO) at (-.6,-2.875){$\left. \begin{matrix} \\ \\ \\ \end{matrix} \Jdp \right\{$};
		
		\draw [font=\scriptstyle]
				(c1) edge (r1)
				(cdots1) edge (rdots1)
				(cdots1) edge (r2)
				(cdots1) edge (rdots2)
				(cdots1) edge (r3)
				(c2) edge (r4)
				(c3) edge (r3)
				(cdots2) edge (rdots2)
				(cdots2) edge (r4)
				(cdots2) edge (r5)
				(cdots2) edge (rdots3)
				(c4) edge (r6);
	\end{tikzpicture}.
\end{center}

Let \phantomsection\label{mpDef}$\mp=\mOf(\Op)$, \phantomsection\label{mdpDef}$\mdp=\mOf(\Odp)$, \phantomsection\label{npDef}$\np=\nOf(\Op)$ and \phantomsection\label{ndpDef}$\ndp=\nOf(\Odp)$.  It is easy to see that $\mp$ denotes the number of row vertices that $\Op$ is connected to.  Then
\begin{align}
\label{jmEq}
|\Jp| + \r-1 \ = \ |\Jp| + |\J| \ \geq \ \mp.
\end{align}
Now suppose for contradiction that $|\SS(\sstar,\OO)|=1$.  By \identifiabilityThm, $\mp  \geq \np + \r$.  Substituting this into \eqref{jmEq} we obtain
\begin{align}
|\Jp| \ &\geq \ \np + 1 \label{jnOneEq}, \\
|\Jdp| \ &\geq \ \ndp + 1 \label{jnTwoEq},
\end{align}
where \eqref{jnTwoEq} follows by symmetry.

Now observe that since $\Jp$, $\Jdp$ and $\J$ form a partition of the row vertices, $\d = |\Jp| + |\Jdp| + |\J|$, so using \eqref{jnOneEq} and \eqref{jnTwoEq} we obtain
\begin{align}
\label{drEq}
\d-\r \ \geq \ \np + \ndp + 1.
\end{align}
On the other hand, since $\Op$ and $\Odp$ form a partition of the $\d-\r$ columns in $\OO$, 
\begin{align*}
\d-\r \ = \ \np + \ndp.
\end{align*}
Plugging this in \eqref{drEq}, we obtain $0 \geq 1$, which is a contradiction.  We thus conclude that $|\SS(\sstar,\OO)|>1$.

\item[($\Rightarrow$)]
For $\r=1$, we prove the converse by contrapositive.  Suppose $|\SS(\sstar,\OO)|>1$.  By \identifiabilityThm\ there exists a matrix $\Op$ formed with a subset of the columns of $\OO$ with $\mp<\np+1$.  Let $\Odp$ be the matrix formed with the remaining columns of $\OO$.

If $\mp + \mdp < \d$, there is at least one row in $\GOO$ that is disconnected, and the converse follows trivially, so suppose $\mp+\mdp = \d$.  Observe that $\np + \ndp = \d-1$.  Putting these two equations together, we obtain
\begin{align}
\label{nmEq}
\mp+\mdp - \np - 1 \ = \ \ndp.
\end{align}
Since $\mp \leq \np$, we obtain $\mdp > \ndp$.  Let $\Jp$ and $\Jdp$ be the row vertices connected to the column vertices in $\Op$ and $\Odp$ respectively.
\begin{center}
	\begin{tikzpicture}
		\node [title] (c) at (.5,-3) {$(i)$};
		\node [title] (c) at (1.25,.5) {Columns};
		\node [title] (r) at (0,.5) {Rows};
		
		\node [vertex] (c1) at (1,-0){};
		\node [vertex] (c2) at (1,-.5) {};
		\node [vertex] (c3) at (1,-1) {};
		\node [title](CwtO) at (1.55,-.5){$\left\} \begin{matrix} \\ \\ \\ \\ \end{matrix} \Op \right.$};
		
		\node [vertex] (c4) at (1,-1.75) {};
		\node [vertex] (c5) at (1,-2.25) {};
		\node [title](CwtO) at (1.55,-2){$\left\} \begin{matrix} \\ \vspace{.25cm} \\ \end{matrix} \Odp \right.$};
		
		\node [vertex] (r1) at (0,0) {};
		\node [vertex] (r2) at (0,-.5) {};
		\node [vertex] (r3) at (0,-1) {};
		\node [title](CwtO) at (-.55,-.5){$\left. \begin{matrix} \\ \\ \\ \\ \end{matrix} \Jp \right\{$};
		
		\node [vertex] (r4) at (0,-1.5) {};
		\node [vertex] (r5) at (0,-2) {};
		\node [vertex] (r6) at (0,-2.5) {};
		\node [title](CwtO) at (-.6,-2){$\left. \begin{matrix} \\ \\ \\ \\ \end{matrix} \Jdp \right\{$};
		
		\draw [font=\scriptstyle]
				(c1) edge (r1)
				(c1) edge (r2)
				(c2) edge (r2)
				(c2) edge (r3)
				(c3) edge (r1)
				(c3) edge (r3)
				(c4) edge (r4)
				(c4) edge (r5)
				(c5) edge (r5)
				(c5) edge (r6);
	\end{tikzpicture}
	\hspace{.25cm}
	\begin{tikzpicture}
		\node [title] (p1) at (0,-3) {};
		\node [title] (p2) at (0,.5) {};
		\draw [font=\scriptstyle]
				(p1) edge (p2);
	\end{tikzpicture}
	\hspace{.25cm}
	\begin{tikzpicture}
		\node [title] (c) at (.5,-3) {$(ii)$};
		\node [title] (c) at (1.25,.5) {Columns};
		\node [title] (r) at (0,.5) {Rows};
		
		\node [vertex] (c1) at (1,-0){};
		\node [vertex] (c2) at (1,-.5) {};
		\node [vertex] (c3) at (1,-1) {};
		\node [title](CwtO) at (1.55,-.5){$\left\} \begin{matrix} \\ \\ \\ \\ \end{matrix} \Op \right.$};
		
		\node [vertex] (c4) at (1,-1.75) {};
		\node [vertex] (c5) at (1,-2.25) {};
		\node [title](CwtO) at (1.55,-2){$\left\} \begin{matrix} \\ \vspace{.25cm} \\ \end{matrix} \Odp \right.$};
		
		\node [vertex] (r1) at (0,0) {};
		\node [vertex] (r2) at (0,-.5) {};
		\node [vertex] (r3) at (0,-1) {};
		\node [title](CwtO) at (-.55,-.5){$\left. \begin{matrix} \\ \\ \\ \\ \end{matrix} \Jp \right\{$};
		
		\node [vertex] (r4) at (0,-1.5) {};
		\node [vertex] (r5) at (0,-2) {};
		\node [vertex] (r6) at (0,-2.5) {};
		\node [title](CwtO) at (-.6,-2){$\left. \begin{matrix} \\ \\ \\ \\ \end{matrix} \Jdp \right\{$};
		
		\draw [font=\scriptstyle]
				(c1) edge (r1)
				(c1) edge (r2)
				(c2) edge (r2)
				(c2) edge (r3)
				(c3) edge (r1)
				(c3) edge (r3)
				(c4) edge (r3)
				(c4) edge (r4)
				(c5) edge (r4)
				(c5) edge (r5);
	\end{tikzpicture}
\end{center}
Now observe that since each column vertex only has two edges, the column vertices in $\Odp$ may connect at most $\ndp+1$ row vertices.  Since $\mdp > \ndp$, either $(i)$ the edges of $\Odp$ connect only vertices in $\Jdp$, leaving $\Jp$ and $\Jdp$ disconnected, or $(ii)$ the edges of $\Odp$ connect a vertex in $\Jp$ with a vertex in $\Jdp$, leaving at least one vertex in $\Jdp$ disconnected.  Either case, $\GOO$ is disconnected, as claimed.

For $\r=2$, consider the following sampling:
\begin{align*}
\OO=\left[ \begin{matrix}
1 & 1 & 0 & 0 & 0 & 0 & 0 & 0 \\
1 & 1 & 1 & 0 & 0 & 1 & 0 & 0 \\
0 & 1 & 1 & 0 & 0 & 0 & 1 & 0 \\
1 & 0 & 1 & 0 & 0 & 0 & 0 & 1 \\
0 & 0 & 0 & 1 & 0 & 1 & 0 & 0 \\
0 & 0 & 0 & 1 & 0 & 0 & 1 & 0 \\
0 & 0 & 0 & 1 & 0 & 0 & 0 & 1 \\
0 & 0 & 0 & 0 & 1 & 1 & 0 & 0 \\
0 & 0 & 0 & 0 & 1 & 0 & 1 & 0 \\
0 & 0 & 0 & 0 & 1 & 0 & 0 & 1
\end{matrix} \right].
\end{align*}
One may verify that $\GOO$ is $\r$-row-connected, yet it does not satisfy the conditions of \identifiabilityThm.  For instance the first $3$ columns of $\OO$, fail to satisfy \eqref{identifiabilityEq}.  This example can be easily generalized for $\r>2$. \hfill $\square$
\end{itemize}


\begin{thebibliography}{1}

\bibitem{balzano}
L.~Balzano, B.~Recht and R.~Nowak, \emph{High-dimensional matched subspace detection when data are missing}, IEEE International Symposium on Information Theory, 2010.

\bibitem{chi}
Y.~Chi, Y.~Eldar and R.~Calderbank, \emph{PETRELS: Subspace estimation and tracking from partial observations}, IEEE International Conference on Acoustics, Speech and Signal Processing, 2012.

\bibitem{mardani}
M.~Mardani, G.~Mateos and G.~Giannakis, \emph{Rank minimization for subspace tracking from incomplete data}, IEEE International Conference on Acoustics, Speech and Signal Processing, 2013.

\bibitem{candes}
E.~Cand\`es and B.~Recht, \emph{Exact matrix completion via convex optimization}, Foundations of Computational Mathematics, 2009.

\bibitem{recht}
B.~Recht, \emph{A simpler approach to matrix completion}, Journal of Machine Learning Research, 2011.

\bibitem{kiraly}
F.~Kir\'aly and R.~Tomioka, \emph{A combinatorial algebraic approach for the identifiability of low-rank matrix completion}, International Conference on Machine Learning, 2012.

\bibitem{jain}
P.~Jain, P.~Netrapalli and S.~Sanghavi, \emph{Low-rank matrix completion using alternating minimization}, ACM Symposium on Theory Of Computing, 2013.

\bibitem{eriksson}
B.~Eriksson, P.~Barford and R.~Nowak, \emph{Network discovery from passive measurements}, ACM SIGCOMM, 2008.

\bibitem{he}
J.~He, L.~Balzano and A.~Szlam, \emph{Incremental gradient on the grassmannian for online foreground and background separation in subsampled video}, Conference on Computer Vision and Pattern Recognition, 2012.


\bibitem{bollobas}
B.~Bollob\'as, \emph{Extremal graph theory},  Dover Publications, 2004.

\end{thebibliography}
\end{document}